\newtheorem{theorem}{Theorem}
\newcommand{\norm}[1]{\left\lVert#1\right\rVert}
\newcommand{\real}{\mathcal{R}}
\newcommand{\rulesep}{\unskip\ \vrule\ }
\title{Learning to Accelerate by the Methods of Step-size Planning}
\author{Hengshuai Yao}
\begin{document}

\author{Hengshuai Yao\\
%\\ hengshu1@ualberta.ca \\
    Huawei Technologies Canada\\
    Department of Computing Science\\
       University of Alberta, Edmonton, AB Canada
}

%\editor{}

\maketitle

\begin{abstract}
Gradient descent is slow to converge for ill-conditioned problems and non-convex problems. An important technique for acceleration is step-size adaptation. The first part of this paper contains a structured review of step-size adaptation methods, including Polyak step-size, L4, LossGrad, Adam, IDBD, and Hypergradient descent, and the relation of step-size adaptation to meta-gradient methods which IDBD and Hypergradient descent belong to. 
We prove that meta-gradient for adapting a scalar step-size cannot be faster than a linear rate  $1-\frac{4\mu L}{(\mu +L)^2}$ for quadratic optimization where $u, L$ are the smallest and largest eigenvalues, and the approach suffers from ill conditioning, just like gradient descent. These are the first understandings of the convergence rate of meta-gradient step-size adaptation to our knowledge.   

In the second part of this paper, we propose a new class of methods of accelerating gradient descent that are distinctive from existing techniques. The new methods, which we call {\em step-size planning}, use the {\bfseries gradient descent update experience} to learn an improved way of updating the parameters. The methods organize the experience into $K$ steps away from each other to facilitate planning. From the past experience, our planning algorithm, Csawg, learns a step-size model which is a form of multi-step machine that predicts future updates. We extends Csawg to applying step-size planning multiple steps, which leads to further acceleration. 

One of the key components in Csawg is the diagonal-matrix form of step-size or the {\bfseries diagonal step-size} for short. 
We illustrate and discuss that the diagonal step-size has the same projection power as the full-sized matrix (counter-intuitive as the linear transformation theory tells us), and it is highly scalable for future large scale applications. We discuss and prove the oddness to allow and use {\bfseries negative step-sizes} in the diagonal step-size which surprisingly makes sense in both deterministic and stochastic gradient descent. 
We develop the first formal understanding of the necessity of using the diagonal step-size for faster convergence, which shows that the widely used scalar step-size for gradient descent leads to slow convergence because it is {\bfseries over-stretched} in that it tries to balance between the smallest and the largest optimal step-sizes for individual parameters. 

We show for a convex problem, our methods surpass the convergence rate of Nesterov's accelerated gradient, $1 - \sqrt{\frac{\mu}{L}}$, where $\mu, L$ are the strongly convex factor of the loss function $F$ and the Lipschitz constant of $F'$, which is the theoretical limit for the convergence rate of first-order methods. On the well-known non-convex Rosenbrock function, our planning methods achieve zero error below 500 gradient evaluations, while gradient descent takes about 10000 gradient evaluations to reach a $10^{-3}$ accuracy. 
We observed that the acceleration over gradient descent and even Nesterov's accelerated gradient appears to be several orders faster. 
We discuss the connection of step-size planing to planning in reinforcement learning, in particular, Dyna architectures. 
\end{abstract}

\section{Introduction}
There are lots of experience in many applications of artificial intelligence processed by computer programs to build certain systems and give instructions for future scenarios to improve our intelligence and life. To name some of today's most notable examples of using experience, a tree search program trains a superhuman-level player by vigorously playing the game against itself \citep{silver2016mastering};
\citep{mnih2015human}'s work demonstrated for the first time a single computer program can play competitively a variety of Atari games \citep{atari_marc} that are even difficult for humans, without changing in codes, but simply training the program with reinforcement learning and neural networks using screen image pixels and scores from individual games;
GPT-3, a large neural network with 175 billion parameters \citep{gpt3}, which performs various nature language problems including theme identification, sentiment analysis, text summarization used in over 300 Apps, was trained with almost all available data from the Internet;
Autonomous driving makes uses of tons of human labeled data as experience to improve pedestrian detection \citep{dollar2011pedestrian} and collision avoidance \citep{levinson2011towards};
Clinical data, containing valuable experience of medical decisions and patient response, which is extremely expensive to obtain, is used for clinical trials and quantifying evidences
\citep{shortreed2011informing}; etc.

Yes. We use lots of experience from  the real world, computer games and simulations, and they do good. 
In this paper, we explore another kind of experience that is not from the above sources, but ubiquitous in deep learning systems, in particular, the  {\em experience of stochastic gradient descent (SGD) updates}.
The SGD algorithm and its variants are extensively used in training deep neural networks, e.g., temporal difference methods \citep{td,tdgammon} as used in AlphaGo, and Adam optimizer \citep{kingma2017adam} as used in GPT-3. 
For large applications like these examples, it takes numerous SGD updates to train a good network model and trillions of FLOPs with dedicated resources such as cpus and gpus. 
The SGD update experience matters because it can improve the quality of neural network models, but unfortunately it is not noticed by the literature yet. 
In this paper, we are going to show how the SGD update experience can be used to accelerate the convergence of SGD to find solutions faster. We are focused on presenting the ideas, illustrating how the algorithms works as well as understanding their theoretical properties, which have a general interest from optimization and %
machine learning communities.

There are many situations that gradient descent can be slow, and step-sizing is a fundamental and complex matter. 
In deep learning, this is extremely important given that SGD powers the training of deep neural networks. In the first place,  
why do we need to adapt the step-size of SGD update at all? A step-size that is too small causes slow learning for gradient descent. A  step-size that is too large is no good either because it causes knowledge already learned to be quickly overwritten. When should we learn fast and when should we learn slow is an important part of learning. For deep learning, the importance of step-size can be reflected by \citet{yoshua_practical}: The initial learning rate, which is the parameter for the step-size, ``is often the single most important hyper-parameter and one should always make sure that it has been tuned''.
Studying step-size adaptation dates back to at least as early as 1960s \citep{Rastrigin1963,schumer1968adaptive} when people started to looking at this problem and solved it using random search though gradient descent was found much earlier by Baron Augustin-Louis Cauchy and Jacques Hadamard. Haskell Curry gave a review of the history of gradient descent as well as applications of using gradient descent for solving some interesting nonlinear optimization problems (including fire control design). It appears that gradient descent makes its name in history not by accident: since \citet{curry1944method}'s analysis of convergence to stationary points of gradient descent, people already knew that ``the (convergence) argument is, incidentally, capable of generalization to certain cases where there are infinitely many parameters.'' \citet{curry1944method} also noted that the step-size can be found by trial and search from a large value and gradually halving it, which is still widely practised in using gradient descent algorithms nowadays. 
The problem of step-size adaptation popularized especially due to the applications of gradient descent in training multiple perceptrons \citep{Rumelhart:1986we,jacobs1988increased,sutton1992adapting,mathews1993stochastic,ostermeier1994step,aboulnasr1997robust,lee2015variable}. It certainly revived recently because in deep learning gradient descent algorithm are used to train extremely large neural networks \citep{kingma2017adam,adadelta,adagrad}. Though this paper is focused on the convergence rate of SGD, which is the training perspective, it should be noted that step-size also influences the generalization of the converged solution \citep{sharp_minima,sharp_minima2,minima_valley}.

In the following section, we review algorithms of adaptive step-size for gradient descent. 
Some of the reviewed techniques are from classical optimization and neural networks; while others are new advancements in deep learning. In Section \ref{sec:step-size-planing}, a basic form of our method, which is a one-time step-size planning method is presented. 
Section \ref{sec:relation_rl_planning} discusses the data view of gradient descent update and the relation of step-size planning to the planning problem in reinforcement learning. Section \ref{sec:exp} presents empirical results.
Section \ref{sec:multi-step} extends step-size planning repeatedly. 
Then in Section \ref{sec:conclusion} we conclude the paper and discuss interesting topics for future research.

\section{Background}\label{sec:review}
 There are in general two lines of developing step-size adaptation techniques in literatures. 
 One line is by solving an optimization function of the step-size, usually minimizing a residual that is normally expressed for the loss function or the distance of the current weight \footnote{We use the word {\em weight} to refer to the parameter vector.} to its convergence point, which we call the {\em functional approach}.  
 
 The other line is developed in close relation to momentum in gradient descent and widely adopted in neural networks, which we call the {\em momentum approach}. The idea of momentum step-sizing is that you want to step forward fast when you're on the right track, especially when the update is in a good direction towards the minimum. 
Intuitively, momentum is one good way of quantifying if the update direction is good.

Without loss of generality, we consider a loss function, $f(w)$, with one minimum being denoted by $w^* = \arg\min_w f(w), w \in \real^d$. 
Our reviews of step-sizes assume $f$ is deterministic and convex, and $w^*$ is unique, for the ease of discussion. 
In addition to step-adaptation methods, we also discuss the relationship of step-size adaptation to meta-gradient, which gains wide popularity in the recent meta-learning literature.
Any review of this kind is bounded by the authors' knowledge and readings, especially the literature of step-size adaptation and meta-gradient is large, and we apologize for the limited coverage. 

\subsection{Convergence Rates}\label{sec:conv_rate}
In this section, we briefly review some key terminology that are frequently used when we talk about the convergence rate of gradient descent algorithms. 

The L-smoothness of a function is important to establish a fast convergence rate for gradient descent algorithms.
 A function $f$ is called $L$-smooth if there exists a constant $L>0$ such that
\[
\| f'(x) - f'(y)\| \le L\|x- y \|, \quad \forall x, y \in \real^d.
\]

The Polyak-\L{}ojasiewicz (PL) condition for $f$ means the increase of $f(x)$ from the lowest point, $f(x^*)$, is upper bounded by the squared norm of the gradient of $f(x)$.   
Formally, there exists $\mu>0$ such that
\[
f(x) - f(w^*) \le \frac{1}{2\mu} \|f'(x) \|^2, \quad \forall x\in \real^d.
\]

Strongly convex implies the PL condition. 
A function $f$ is called $\mu$-strongly convex if
\[
f(x) \ge f(y) + f'(y)^T(x-y) +\frac{\mu}{2}\|x-y \|^2,  \quad \forall x, \forall y \in \real^d.
\]

The well-known convergence rates and algorithms given these conditions on $f$ are as follows. 
\begin{itemize}
\item 
If $f$ is L-smooth, gradient descent with a constant step-size $\alpha=1/L$ is guaranteed to find a $j \le k$, such that $\|f'(w_j) \| \le 1/\sqrt{k}$ in the first $k$ iterations, e.g., see \citep{course_gd_rate}.

\item
If $f$ is L-smooth and convex, gradient descent using the same constant $\alpha$ at the $k$th iteration converges at a rate of $1/k$, with $f(w_k) -f(w^*) \le \frac{L}{2k} \norm{w_0-w^*}^2$.  
In this case, Nesterov's accelerated gradient (AGD) converges at a rate of $1/k^2$ for the $k$th iteration, e.g, see \citep{course_gd_rate_bubeck}. AGD is a momentum method. After a normal gradient descent step, $w_{k+1} = x_k - \frac{1}{L} f'(x_k)$, the next $x$ is generated by adding the momentum from $w$: 
\[
x_{k+1} = w_{k+1} + \gamma_k (w_{k+1} - w_{k}).
\]
\citet{nesterov1983method} showed that if we  
schedule the momentum rate $\gamma_k$ in a special way, then we have $f(w_k) - f(w^*) \le O(2L/k^2)$ at iteration $k$. 

\item 
If $f$ is $L$-smooth and it also satisfies the Polyak-Lojasiewicz (PL) condition, 
then one can show that the gradient descent with $\alpha=1/L$ guarantees a strict error deduction from the previous iteration:
\[
f(w_k) -f(w^*) \le \left(1-\frac{\mu}{L}\right) \left(f(w_{k-1}) -f(w^*)\right), 
\]
which is a linear convergence rate. So for $f$ that is $L$-smooth and $\mu$-strongly convex, gradient descent has a linear rate $1-\mu/L$. In this case, AGD has a faster rate too \citep{nesterov2003introductory}, in particular:
\[
f(w_k) -f(w^*) \le \left(1-\sqrt{\frac{\mu}{L}}\right)^k \left(f(w_{0}) -f(w^*) + C_0 \|w_0-w^* \|^2 \right),
\]
where $C_0$ is a constant.
Note this can be achieved by a constant scheduling of $\gamma_k$ in addition to a recurrent one. The scheduling requires the knowledge of $\mu$ and $L$ with $\gamma_k=(\sqrt{L} - \sqrt{\mu})/(\sqrt{L} + \sqrt{\mu})$. \citet{nesterov2003introductory} also constructed a function $f$ such that the convergence rate cannot be faster than the linear rate $1-\sqrt{\frac{\mu}{L}}$. Thus this convergence rate bound is tight. However, this result only applies to first-order methods. Our paper aims to explore if it's possible to be faster than this long considered optimal rate by using only the gradient information, but the induced methods are not first order.

\end{itemize}

\subsection{Polyak Step-size}
Polyak step-size is a functional approach, and is derived by minimizing an upper bound function for $\norm{w_{k+1}(\alpha) - w^*}^2$. 

To see this, let's start with a sub-gradient method. 
At iteration $k+1$, the weight update is
\[
w_{k+1} = w_k -\alpha_k g_k,
\]
where $g_k$ is a sub-gradient of $f$ at $w_k$. The step-size is given by 
\begin{equation}\label{eq:polyak}
\alpha_k =  \frac{f(w_k) - f(w^*)}{ || g_k ||^2}.
\end{equation}
It achieves a convergence rate of $O(1/\sqrt{k})$ when $f$ is convex. One may think this is not an impressive rate, but note that $f$ does not have to be L-smooth. 

The Polyak step-size can be derived by expressing the new distance after the update to the minimum, $w^*$, using the old distance, 
\begin{align}
||w_{k+1} - w^* ||^2 & = || w_k -\alpha_k g_k -w^* ||^2,  \nonumber \\
& = || w_k - w^* ||^2 + \alpha_k^2 || g_k||^2- 2 \alpha_k (w_k - w^*)^Tg_k. \label{eq:polyak1}
\end{align}
Note the last term is a correlation between the old error vector and the current sub-gradient. First we have to make sure this deduction is non-negative otherwise the new distance grows larger. 
In fact, indeed $(w_k - w^*)^Tg_k$ is non-negative because 
\begin{equation}\label{eq:polyak2}
(w_k - w^*)^Tg_k  \ge f(w_k) - f(w^*) \ge 0,  
\end{equation}
which comes from that $g$ is the {\em sub-gradient} (thus requiring convex on $f$):
\[
f(w^*) \ge f(w_k) + (w^* - w_k )^Tg_k. 
\]
Good. We know the reduction is non-negative now. 
Next, the derivation switches to $f(w_k) - f(w^*)$ from $w_k - w^*$. 
Combining equations \ref{eq:polyak1} and \ref{eq:polyak2}, we have 
\[
||w_{k+1} - w^* ||^2  \le || w_k - w^* ||^2 + \alpha_k^2 || g_k||^2 - 2 \alpha_k[f(w_k) - f(w^*)].   
\]
Now we can minimize the upper bound of $||w_{k+1} - w^* || $, which is on the right-hand side of the equation, by a choice of step-size. 
First we can write $||w_{k+1} - w^* || $ in terms of error reduction:
\begin{align*}
||w_{k+1} - w^* ||^2  \le || w_k - w^* ||^2 - \alpha_k  \left[ 2 (f(w_k) - f(w^*)) - \alpha_k || g_k||^2 \right]. 
\end{align*}
To maximize the reduction, we solve the optimization problem:
\begin{align*}
\max_\alpha E(\alpha) & = \max_\alpha   \left[ 2 \alpha (f(w_k) - f(w^*)) - \alpha^2 || g_k||^2 \right] \\
& =  || g_k||^2 \max_\alpha   \left[ 2 \alpha \frac{f(w_k) - f(w^*)}{|| g_k||^2} - \alpha^2  \right], 
\end{align*}
giving
$
\alpha_k = \arg\max E(\alpha)
$, 
which is exactly the Polyak step-size in equation \ref{eq:polyak}. 
The convergence rate of $O(1/\sqrt{k})$ can be established using telescoping sum and the boundedness of the sub-gradient \citep{polyak_book}. 
The SPS steps-size extends this step-size to stochastic gradient descent \citep{sps_polyak}.

\subsection{L4}

L4 by \cite{rolinek2018l4} is a functional approach for step-size. It is based on 
\[
w_{k+1} = w_k - \alpha v_k, 
\]
where the update $v_k$ can be a sub-gradient, or provided by another algorithm. In their paper, they used gradient with momentum and ADAM to provide $v$. 
L4 is derived by minimizing the post-update loss: $f(w_{k+1}(\alpha))$. 
The post-update loss was approximated using $f(w_k)$, which is reasonable because gradient descent changes the weight only a little between successive updates. 
This done by a linearization of $f$:%the first-order Taylor (linear):
\begin{equation}\label{eq:L4}
f(w_k - \alpha v_k ) \approx f(w_k) - \alpha_k f'(w_k)^T v_k.
\end{equation}
To approximately reach $f(w^*)$ after the update,  the right side is matched to $f(w^*)$. Equivalently, $\alpha_k = \arg\min_{\alpha}| f(w^*) - (f(w_k) - \alpha f'(w_k)^T v_k)|$, from which we derive that, \footnote{To account for numerical stability, a small perturbation is added to the denominator in the L4 step-size.}
\[
\alpha_k=\frac{f(w_k) - f(w^*)}{f'(w_k)^Tv_k}. 
\]
Interestingly, this matches exactly the Polyak step-size when $v_k$ is taken to be the current gradient for the convex case. 
The two step-sizes are developed from different perspectives. 
Polyak step-size is derived from an upper bound on $\norm{w_{k+1} -w^*}$, while L4 is from an approximation of $f(w_{k+1}) - f(w^*)$. 
Yet the two step-sizes match. 
The bridge is the convex of $f$.
Without $f$ being convex, the transition from weight difference to functional difference would not be possible. 
Thus the boundary of the two works is that L4 enables the deduction of the loss in the sense of the first order approximation; and in the case of $f$ is convex, the step-size also guarantees a maximum reduction of the weight difference to $w^*$.  

L4 authors proposed to adapt an estimate of $f(w^*)$ as learning proceeds. The basic technique is to set it to the lowest loss seen so far on mini-batches. Note that some mini-batches could have a very small loss. Imagine there is a mini-batch that is sampled more often than others, especially in an online setting like experience replay in deep reinforcement learning, this mini-batch is going to have much smaller loss than others; or this could happen just because of random initialization. Thus it appears that the minimal loss seen so far on mini-batches is not going to be achieved on the whole data. A remedy used in L4 is to increase it every iteration by a factor of $1+ 1/\gamma$, where $\gamma $ was set to $0.9$ in their experiments. 
However, this mechanism of adapting $f(w^*)$ suffers from high variances due to mini-batch sampling. Thus the step-size computed can be very unstable in practice. This was confirmed in an experiment by \cite{wojcik2019lossgrad}, in which they found L4 performed unstably across datasets with gradient descent providing the update direction $v$ (i.e., Polyak step-size).

\subsection{LossGrad}
LossGrad by  \cite{wojcik2019lossgrad} is a functional approach too. 
It also minimizes $f(w_{k+1}(\alpha))$ like L4, and extends the step-size function from linear in L4 to quadratic. 

Let us consider in L4, $v$ is set to the gradient. 
The basic idea of LossGrad is to start with an examination of the linearization error of L4, taken to be the difference between the two sides of equation \ref{eq:L4}:
\[
 e_k = {f\left(w_k - \alpha_k f'(w_k)\right) - \left[f(w_k) - \alpha_k \norm{f'(w_k)}^2\right]}, 
\]
Suppose the step-size of LossGrad is denoted by $t$. 
Now LossGrad uses the following function of step-size: 
\begin{align*}
f\left(w_k - tf'(w_k)\right) &\approx f(w_k) - t\norm{f'(w_k)}^2 +  e_k \left(\frac{t}{\alpha_k}\right)^2,  \\
&= f(w_k) - t\norm{f'(w_k)}^2 +  \left(\frac{e_k }{\alpha_k^2}\right)t^2 \\
&\stackrel{def}{=}q(t),
\end{align*}
wherein there is a linear part (same as in L4) plus a quadratic term. 
To make sense of this, note the left side can be any nonlinear but smooth function of step-size $t$;
and the right-hand side is a quadratic function of $t$. 
When $t = \alpha_k$, the above holds with equality and there is no approximation error by the quadratic. 
When $t=0$, it also holds with equality. 
When $t\in (0, \alpha_k)$, there is approximation error.
Due to the smoothness of $f\left(w_k - tf'(w_k)\right)$, one can expect this is a good approximation. 

Now we are ready to look what this quadratic approximation suggests how we update the improved step-size $t$ from $\alpha_k$. 
A positive $e_k$ indicates the linearization using $\alpha_k$ is too small and hence we should decrease $\alpha_k$;
while a negative $e_k$ indicates that linearization is too big and we should increase $\alpha_k$. 
In fact, in the LossGrad paper, increasing of the step-size is more careful than this. 
In particular, it examines the $q'(t)$ at $t = \alpha_k$. If the derivative is negative, it then increases $\alpha_k$; otherwise, it decreases $\alpha_k$. 
One can show that this decision boundary is, $\frac{e_k}{\alpha_k \norm{f'(w_k)}^2} = \frac{1}{2}$, which matches exactly the condition $r_h = 1/2$ in the LossGrad paper.  
The LossGrad algorithm then compute a new step-size $t$ by an increase or decrease from the previous $\alpha_k$ by a factor that requires tuning.

\subsection{Adam}\label{sec:adam}
We start with gradient descent with moment term, which dates back to early days of neural networks   \citep{Rumelhart:1986we}, and has been known to be able to skip local minima. A classical way of wring gradient descent with momentum is
\begin{equation}\label{eq:momentum_classical}
w_{k+1}  = w_k - (\alpha f'(w_k) - p \Delta_{k-1}),  
\end{equation}
where $\Delta_{k-1}= w_k - w_{k-1}$ is the last weight change.
This is exactly equivalent to the following update form: 
\begin{equation}\label{eq:momentum_average}
\Delta_{k} = p \Delta_{k-1} - \alpha f'(w_k), \quad
w_{k+1}  = w_k + \Delta_{k}. 
\end{equation}
This doesn't look very obvious, but it's actually quite simple:
\begin{align*}
w_{k+1} & = w_k + \Delta_{k} \\
& =w_k +  p \Delta_{k-1} - \alpha f'(w_k)\\
& =w_k -  (\alpha f'(w_k) - p \Delta_{k-1})
\end{align*}
The importance of writing the classical form of momentum in terms of equation \ref{eq:momentum_average} is that it is an averaging form that shows the weight change is an exponential smoothing of the (negative) past gradients:
\begin{align*}
\Delta_{k} & =  p \Delta_{k-1} - \alpha f'(w_k) \\
& = -\alpha \sum_{s=0}^k p^{k-s}  f'(w_s).
\end{align*}
When $s=k$, this sum gets $f'(w_k)$;
when $s=k-1$, this sum gets $p f'(w_{k-1})$;
etc. 
The momentum rate $p$ dampens the gradients according to their recency. 

Gradient descent with momentum is known to speed up convergence of gradient descent by improving eigenvalue properties of the underlying O.D.E \citep{10.1016/S0893-6080(98)00116-6} (check for the discrete case in the paper). In particular, when the momentum rate $p \in [0, 1- \sqrt{\alpha k_i}]$, where $k_i$ is the $i$-th largest eigenvalue of the Hessian matrix (assumed to be positive definite), the $i$-th component of a linearly transformed $w$ converges faster than pure gradient descent (with no momentum rate, $p=0$). 

Now let's proceed to ADAM. 
ADAM first smoothes gradient by an incremental version of the following exponential averaging of past gradient, 
\[
m_k = \frac{1}{1-\beta_1} \sum_{s=0}^k \beta_1^{k-s} f'(w_{s}). 
\]
So by updating the weight proportional to $-m_k$, ADAM implements gradient descent with momentum implicitly (with a normalization factor). 
So if we also normalize $\Delta_k$ by $1-p$ in the gradient descent with momentum, we would have $\beta_1 = p$. 
To account for variance,  ADAM also smoothes the squared gradient component-wise:
\[
v_k(i) = \frac{1}{1-\beta_2} \sum_{s=0}^k \beta_2^{k-s} (f'_i(w_{s}))^2. 
\]
Now this is ADAM:
\[
w_{k+1}(i) = w_k(i) -\alpha \frac{m_k(i)}{\sqrt{v_k(i)}}. 
\]  
This normalization idea by $\sqrt{v_k(i)}$ is discussed in an algorithm called RMSprop \footnote{RMSprop was discussed in a lecture note by Geoffrey Hinton, Nitish Srivastava and Kevin Swersky, \url{http://www.cs.toronto.edu/~hinton/coursera/lecture6/lec6.pdf}}, (which is an improvement over normalized gradient method).
Thus ``ADAM is Momentum + RMSprop''. 
ADAM is efficient in computation: everything is performed component-wise and has a linear complexity. 
The term, $\alpha/\sqrt{v_k(i)}$, can be viewed as the component step-size for $w(i)$. 
There are other variant of Adam, such as AdamW, which applies $L_2$ regularization \citep{adamw}. 

\subsection{IDBD}\label{sec:idbd}
Like Adam, IDBD is also a momentum method \citep{sutton1992adapting}. IDBD is an incremental version of Delta-Bar-Delta developed by \citet{jacobs1988increased} with the assistance of Rich Sutton. The principle of both algorithms is that when the current gradient and an average of the previous gradient of a parameter have the same sign, the step-size of that parameter should be increased. 
IDBD adapts step-size incrementally for an online linear regression setting. 
It was presented for least-mean-squares (LMS) in which the loss function is a quadratic mean square loss. 
Define this loss as $E[\delta_k^2(w, \alpha)]$ with $\delta_k = y_k^* - y_k(w, \alpha)$, which is the sampled loss at time step $k$. Here $y_k^*$ is the ground truth for input $x_k$. 
IDBD observes a new sample and loss each time step, and incrementally minimizes this loss.  
It is a parameter-wise update meaning that each parameter has an individual step-size. 
The step-size employs an exponential form to ensure positiveness:
\begin{align*}
\beta_{k+1}(i) & = \beta_k(i) + \eta \left[\delta_k x_k(i)\right] h_k(i) \\
\alpha_{k+1}(i) &= e^{\beta_{k+1}(i)}. 
\end{align*}
The $\eta$ is the only hyper-parameter (the rate of step-size, or meta-step-size).
The term in the bracket is the current gradient at this time step. 
The $h$, called the {\em trace} of recent weight changes, maintains an estimate of the update direction, detailed below. 
Thus the step-size uses the correlation in the current gradient and tracked gradient estimate to guide the adjustment of the step-size.  
After the step-size is updated, the weight $w$ is updated using the new step-size and gradient descent:
\begin{align*}
w_{k+1}(i) &= w_k(i) + \alpha_{k+1}(i) \delta_k x_k(i) 
\end{align*}
Finally, the trace is updated by
\[
h_{k+1}(i) = h_k(i) relu\left(1-\alpha_{k+1}(i) x_k^2(i)\right) + \alpha_{k+1}(i)    \delta_k  x_k(i).
\]
The trace tracks the consistency in the update direction. 
The employment of relu is interesting. 
It keeps considering the past value of $h$ but it occasionally resets this memory. 
The resetting happens when 
\[
\alpha_{k+1}(i) x_k^2(i) \ge 1, 
\]
{\em i.e.}, when relu turns the signal off, which turns off the past. 
This condition prevents the step-size from growing too big (overly optimistic in the consistency of the update).
That why $x_k^2(i)$ was used is kind of mysterious.  
One way of thinking of it is that the step-size should not be bigger than $1/x_k^2(i)$, which will be illustrated shortly.

The trace $h$ is a form of momentum, with a special recency decay. So the key idea of IDBD is to use momentum to adjust step-size. If we are currently in the momentum direction, we increase the step-size; otherwise, we decrease it. 
Assuming that the step size $\alpha(i)$ only affects $w(i)$ (not the other weights), \cite{sutton1992adapting} proved that the update of $\beta$ is gradient descent too, minimizing the same objective as $w$, which is $E[\delta_k^2]$.  

Below we investigate what the assumption that ``$\alpha(i)$ only affects $w(i)$'' means and why the step-size should not be bigger than $1/x_k^2(i)$.  
In fact, this assumption means $\alpha$ is a step-size that updates $w$ in a generic matrix-vector form as
\begin{equation}\label{eq:stepsize-diag}
w_{k+1} = w_k - D_k f'(w_k),  
\end{equation}
where $D_k$ is the diagonal matrix with $diag(D) = \alpha$. 
If there is some step-size $\alpha$ such that each $\alpha(i)$ only affects $w(i)$, then the update of $w$ can be written this way. 
On the other hand, if a step-size leads to an update like this, then each step-size $\alpha(i)$ only affects the $i$-th component.  
In the case of LMS, we have 
\begin{align*}
w_{k+1} & = w_k + D_k \delta_k x_k \\ 
\Longleftrightarrow w_{k+1}(i) & = w_k + \alpha_k(i) \delta_k x_k(i) \\ 
&= w_k(i) + \alpha_k(i) \left( y_k^* - \sum_j w(j) x_k(j)\right) x_k(i) \\ 
&= w_k(i)  - \alpha_k(i)  \sum_j w(j) x_k(j) x_k(i) + \alpha_k(i) x_k(i)y_k^*\\ 
&= \left( 1-  \alpha_k(i) x_k^2(i)\right)w_k(i)  +  \left( \alpha_k(i)x_k(i) y_k^* -  \alpha_k(i) \sum_{j\neq i} w(j) x_k(j) x_k(i) \right). 
\end{align*}
Thus it is clear that a sufficient and necessary condition for every $w_k(i)$ to converge is that $\alpha_k(i) \le \frac{1}{x_k^2(i)}$. The bias term in the bracket will also be finite if this condition holds for all $i$. 
The above analysis applies to any step-size whose update can be written in diagonal form in equation \ref{eq:stepsize-diag}, which includes IDBD. 
Thus it is important for the step-size of IDBD to have $\alpha_k(i) \le \frac{1}{x_k^2(i)}$. 
If this condition does not hold (meaning the step-size of IDBD grows beyond this value at some time step), 
the recency decay, $1- \alpha_k(i) x_k^2(i)$, used for the trace, being negative could cause the algorithm to diverge. 
This explains why IDBD employs relu, to make sure that the decay in $h$ is non-negative. 

Note IDBD still has a meta-step-size parameter $\eta$. An effort to further remove this hyper-parameter was made by Autostep \citep{autostep}. The first modification of IDBD by Autostep is to normalize the update of $\beta$ by a running average of its historical updates. 
The second modification is to normalize $\alpha_k(i)$ by the sum of all the step-sizes at this iteration, weighted by $x_k^2(i)$, so that the error at the $k$th step is reduced without overshoot. 
In reinforcement leanring,
\citet{idbd_td} extended IDBD to TD methods for stationary and non-stationary prediction tasks, which showed better results than TD methods with the constant step-size and scalar step-size adaptation. Metatrace \citep{meta_trace} adapted the step-size of the actor-critic algorithm with eligibility trace for both cases of the scalar step-size and the component step-sizes, which can accelerate the learning process of the actor-critic algorithm. 
The literature called using a step-size for each individual parameter the {\em vector or vectorized step-size}, e.g., see \citep{autostep,idbd_td,meta_trace}. For the reason that the update is equivalent to using a diagonal matrix, we call this technique the {\em diagonal-matrix step-size} or {\em diagonal step-size} for short, which emphasizes the form of the diagonal matrix where the vectorized step-size corresponds to the diagonal part. The diagonal matrix is an important form of step-size in this paper. Later in this paper we are going to see that formulating with the diagonal step-size enables us a better understanding of IDBD and individual component step-sizes.

\subsection{Hypergradient Descent}\label{sec:hd}
Let's look at another step-size adaptation method called hypergradient descent
\citep{baydin2017online}. This method is a functional approach.
The method adapts a scalar step-size though the authors commented that adapting individual step-sizes is also possible. With the gradient descent update:
\begin{align*}
w_{k+1} & = w_k - \alpha f'(w_k(\alpha)) \\
& = w_k - \alpha f'(w_{k-1} - \alpha f'(w_{k-1}(\alpha))) \\
& = w_k - \alpha f'(w_{k-1} - \alpha f'(w_{k-1})), 
\end{align*}
we consider the gradient of the current iteration is a function of the step-size from last iteration (the second line), but only the one-step influence (the third line). Thus the method can be derived by minimizing $E(\alpha) = f(w_{k+1}(\alpha))$. Note the connection to Polyak step-size, which minimizes $\| w_{k}(\alpha) - w^* \|^2$. So by taking the derivative of $f(w_k(\alpha))$ with respect to $\alpha$, we can adapt $\alpha$ according to the gradient descent minimizing $E(\alpha)$:
\[
\alpha_{k+1} = \alpha_k + \eta f'(w_k)^Tf'(w_{k-1}), 
\]
which is the hypergradient descent (HD) algorithm using $\eta$ as the meta step-size parameter. The algorithm means if the gradients of two successive iterations are in an acute angle, then we increase the step-size. This makes sense if one imagines the loss contour is convex and the algorithm's basic hyperthesis is that, increase the step-size as long as we're on the same side of the minimum as the last iteration. 
This is shown in Figure \ref{fig:hd_illustration} (Left).
Finally the step-size increases to a stage that overshoots to the other side of the minimum, and the step-size decreases. So HD uses the correlation of gradients instead of the momentum to guide the adjustment of the step-size. Because the correlation of the gradients is similar to the gradient direction change, HD may be viewed as a second-order method that is efficient to compute. \citet{baydin2017online} also extended the method to a generic update which has a step-size parameter and applied to other gradient descent algorithms.
\begin{figure}[t]
    \centering
    \begin{subfigure}[c]{0.2\textwidth}
    \includegraphics[width=\textwidth]{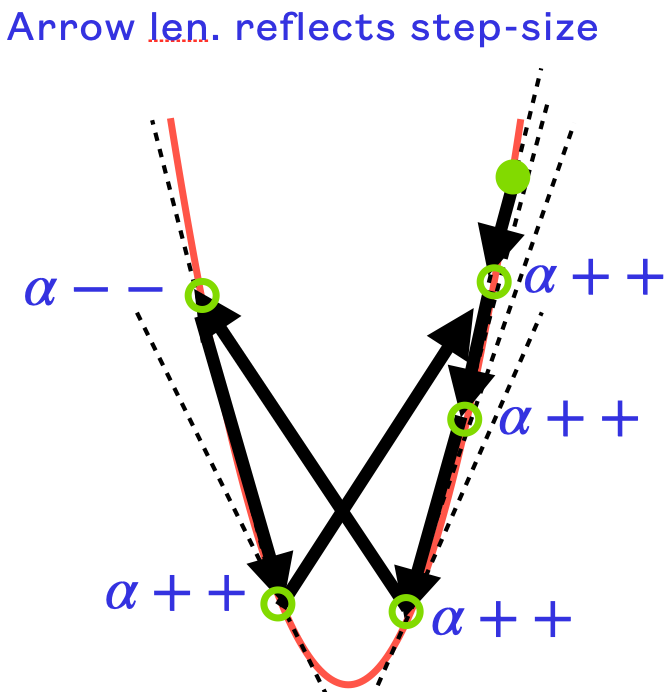}
    \end{subfigure}
    \hfill
    \rulesep
\begin{subfigure}[c]{0.77\textwidth}
    \includegraphics[width=\textwidth]{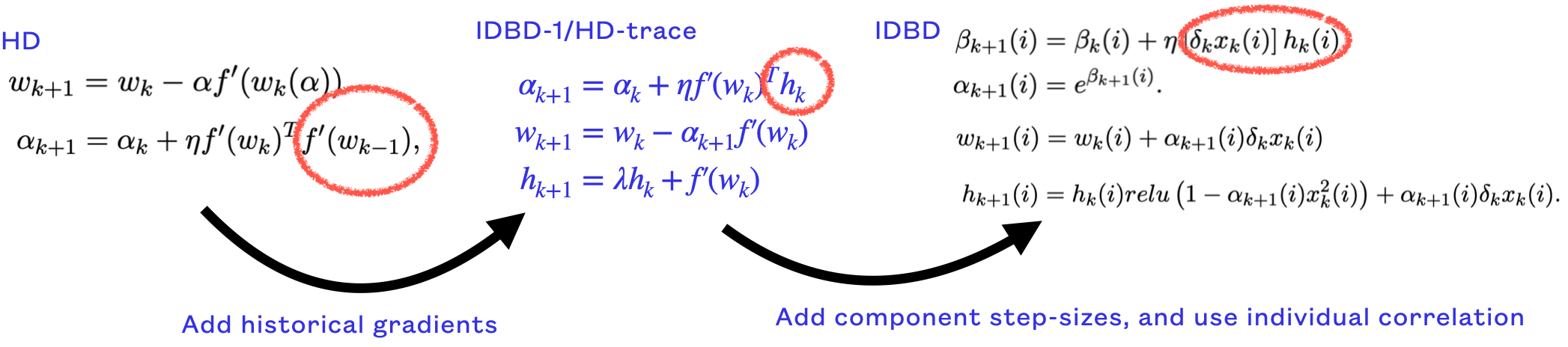}
\end{subfigure}
    \caption{Left: Illustration of step-size increase/decrease with Hyper-gradient (HD) descent where the iteration starts from the green dot. Right:  Understanding the relation between HD and IDBD using IDBD-1 as a bridge. IDBD-1 is a (new) algorithm that uses the smoothed historical gradient to adapt a single step-size by measuring the correlation with the current gradient, where $\lambda \in [0,1)$. When $\lambda=0$, IDBD-1 reduces to HD.}
    \label{fig:hd_illustration}
\end{figure}

Interestingly, as shown in Figure \ref{fig:hd_illustration} (Right), if we compare HD with the updates of IDBD in the linear case, especially the adaptation of $\beta$, we can see that except the difference in that IDBD adapts a diagonal and exponential form of step-size while HD adapts a single step-size, the two algorithms are mainly different in that IDBD uses a recent history of gradient correlation while HD uses the latest two successive gradients to calculate correlation. Thus in this sense, HD is an extension of IDBD beyond the linear case. Because of the use of the diagonal step-size, gradient correlation in IDBD only considers component-wise correlation, and the cross correlation between different components of the gradients is zero. Using the recent gradient correlations instead of only the latest one should be more robust for step-size adaptation, especially because in the setting of stochastic gradient descent measuring whether we are on the same side of the minimum can be noisy. Thus it may be worthwhile to explore the performance of IDBD-1 which is the trace version of HD for deep learning. 

The motivation of the meta-gradient step-size adaptation studies was in a large part on the less or free tuning effort of the step-size.  
However, it is unknown how fast the meta-gradient step-size adaptation methods are in terms of convergence rate. Here we develop an understanding of this question using the case of quadratic optimization. 

\begin{theorem}\label{thm:meta-grad-linear-rate}
For the quadratic optimization problem, with $f(x) = \frac{1}{2}w^T Q w $, where $Q$ is a symmetric positive-definite matrix, with the smallest eigenvalue being $\mu$ and the biggest eigenvalue $L$. Using the optimal step-size found by the meta-gradient step-size adaptation methods leads to a linear rate for $f(w_k) -f(w^*)$, given by $1-\frac{4\mu L}{(\mu+L)^2}$.   
\end{theorem}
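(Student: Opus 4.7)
The plan is to reduce the question to the classical analysis of exact-line-search steepest descent on a strongly convex quadratic and then invoke the Kantorovich inequality. First I would argue that each of the meta-gradient rules reviewed above (Polyak, L4, HD) has the same ``optimum'' on this problem: the step-size that minimizes the post-update loss along the current gradient. On $f(w)=\tfrac12 w^T Q w$ this exact line search has the closed-form minimizer
\[
\alpha_k^* \;=\; \frac{g_k^T g_k}{g_k^T Q g_k},\qquad g_k := f'(w_k) = Q w_k,
\]
so I would interpret ``the optimal step-size found by meta-gradient'' as $\alpha_k^*$ and analyze the deterministic iteration $w_{k+1}=w_k-\alpha_k^* g_k$. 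The alignment across methods is immediate for Polyak and L4, which explicitly minimize $\|w_{k+1}-w^*\|^2$ or a linearization of $f(w_{k+1})$; on a quadratic both rules collapse to the same formula. For HD one observes that its stationarity condition $f'(w_k)^T f'(w_{k-1})=0$ is exactly the optimality condition of an exact line search at the previous step, so any converged HD step-size coincides with $\alpha^*$.

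Next I would compute the one-step contraction. Since $w^*=0$ and $f(w^*)=0$, we have $f(w_k)=\tfrac12 g_k^T Q^{-1} g_k$. Substituting $w_{k+1}=w_k-\alpha_k^* g_k$ and expanding $f(w_{k+1})$ gives, after routine simplification, the standard one-step identity for steepest descent on a quadratic,
\[
\frac{f(w_{k+1})-f(w^*)}{f(w_k)-f(w^*)} \;=\; 1 - \frac{(g_k^T g_k)^2}{(g_k^T Q g_k)(g_k^T Q^{-1} g_k)}.
\]

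The last ingredient is Kantorovich's inequality: for symmetric positive-definite $Q$ with eigenvalues in $[\mu,L]$ and any nonzero $g$,
\[
\frac{(g^T g)^2}{(g^T Q g)(g^T Q^{-1} g)} \;\ge\; \frac{4\mu L}{(\mu+L)^2}.
\]
Plugging this lower bound into the identity above immediately yields the claimed per-iteration contraction $1-\tfrac{4\mu L}{(\mu+L)^2}$ and hence the stated linear rate. I would cite Kantorovich rather than reprove it; the standard argument diagonalizes $Q$ and reduces the ratio to a one-dimensional optimization over convex combinations of eigenvalues in $[\mu,L]$.

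The main obstacle I anticipate is not the algebra but the alignment step: carefully verifying that the ``optimal'' fixed points of Polyak, L4, and HD really do coincide on this problem so that a single rate covers all of them (HD in particular only reaches this fixed point asymptotically, so one has to argue about the limiting behaviour of its meta-update). Tightness, which is implicit in stating the result as ``the'' rate rather than an upper bound, comes from the standard construction where the initial $g_0$ lies in the plane spanned by the eigenvectors of $Q$ corresponding to $\mu$ and $L$; on this initial data Kantorovich is attained with equality and the contraction $1 - 4\mu L/(\mu+L)^2$ is sharp.
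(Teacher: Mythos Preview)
Your proposal is correct and follows essentially the same route as the paper: derive the exact-line-search minimizer $\alpha_k^*$ of $f(w_{k+1}(\alpha))$, compute the one-step contraction ratio, and bound it by Kantorovich's inequality. The paper carries the computation in terms of $w_k$ and substitutes $v_k=Qw_k$ only at the end, whereas you work with $g_k=Qw_k$ from the outset, but the two calculations are line-for-line equivalent; your extra discussion of Polyak/L4/HD alignment and of tightness goes a bit beyond what the paper actually proves but does no harm.
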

\begin{proof}
 Recall that meta-gradient methods aim to find an optimal step-size $\alpha$ by minimizing $f(w_{k+1}(\alpha))$. So first let us express $f(w_{k+1}(\alpha))$ using the gradient descent update and $\alpha$. According to  $f'(w) = Qw$, the definition of $f$, and the symmetry of $Q$, we have 
 \begin{align}
     f(w_{k+1}(\alpha)) &= f(w_k-\alpha f'(w_k) ) \nonumber\\
     &=f(w_k - \alpha Qw_k)\nonumber\\
     &=f((I-\alpha Q)w_k) \nonumber\\
     &=\frac{1}{2} w_k^T (I-\alpha Q) Q(I-\alpha Q) w_k \nonumber\\   
     &= \frac{1}{2} w_k^T (Q-\alpha Q^2) (I-\alpha Q) w_k \nonumber \\
     &= \frac{1}{2} w_k^T (Q-\alpha Q^2 -\alpha Q^2 +\alpha^2 Q^3)  w_k. \nonumber%\label{eq:quadratic_fQ}
 \end{align}
 The optimal step-size that meta-gradient methods seek is the one that minimizes $f(w_{k+1}(\alpha))$ with respect to $\alpha$. Let this optimal step-size be $\alpha^*$. By solving $0=\frac{\partial f(w_{k+1}(\alpha))}{\partial \alpha}$, we have 
 \[
 \alpha^* = \frac{w_k^TQ^2w_k}{w_k^TQ^3w_k}.
 \]
 Note this step-size resembles the step-sizes for the minimal residual approach \citep{saad03:IMS} and the preconditioned iterates \citep{ptd_yao}. The common feature of this kind of step-sizes is that there is one more matrix in the denominator than the numerator, serving to have a normalization effect for the gradient.  
 Plug this back to $f(w_{k+1}(\alpha))$, we have
 \begin{align}
 f(w_{k+1}(\alpha^*))
 &= \frac{1}{2}w_k^T(Q - \alpha^* Q^2 - \alpha^*(Q^2 - \alpha^* Q^3))w_k \nonumber \\
 &= \frac{1}{2}w_k^T(Q - \alpha^* Q^2)w_k \nonumber%\label{eq:quadratic:f_reduced} 
 \\
 &= f(w_k) - \frac{1}{2}w_k^T\alpha^* Q^2 w_k \nonumber\\
 &= f(w_k) - \frac{1}{2} \alpha^* \norm{f'(w_k)}^2. \nonumber
 \end{align}
Now it suffices to understand how much this error reduction is:
\[
\frac{1}{2} \alpha^* \norm{f'(w_k)}^2 =  \left(f(w_k) - f(w^*)\right) - \left(f(w_{k+1}(\alpha^*)) -f(w^*) \right), 
\]
from which the reduction ratio is
\begin{align*}
\rho_k = 
\frac{f(w_{k+1}(\alpha^*)) -f(w^*)}{f(w_k) - f(w^*)} &= 1- \frac{1}{2} \frac{\alpha^* \norm{f'(w_k)}^2}{f(w_k) - f(w^*)} \\
&= 1- \frac{1}{2} \frac{ \norm{f'(w_k)}^2}{f(w_k)} \alpha^*, 
\end{align*}
because $f(w^*)=0$. We now bring back the matrix $Q$ into the above and obtain
\begin{align*}
\rho_k &= 1-  \frac{w_k^TQ^2w_k}{w_k^T Q w_k} \cdot \frac{w_k^TQ^2w_k}{w_k^TQ^3w_k}.
\end{align*}
We are going to exploit the fact the numerator and denominator both have the powers of $Q$ summing to the same number in a special way. Let $Qw_k=v_k$, then
\begin{align*}
    \rho_k &= 1-  \frac{\norm{v_k}^4}{w_k^T Q w_k} \cdot \frac{1}{w_k^TQ^3w_k}  \\
    &= 1-  \frac{\norm{v_k}^4}{v_k^TQ^{-1} v_k} \cdot \frac{1}{v_k^TQv_k} \\
    & \le 1 - \frac{4\mu L}{(\mu+L)^2}. 
\end{align*}
where third line is according to Kantorovich inequality \citep{householder_theory,horn1985cr}.
This completes the proof. 
\end{proof}
The above proof can be extended to a two-time-scale analysis of meta-gradient step-size adaption methods.
Here we briefly sketch the proof and discuss what it means. Two-time-scale  meta-gradient step-size adaptation methods adapt the step-size faster than $w$. At the steady-state, the step-size converges to $\alpha^*$, and the rate of convergence of $w$ is given exactly in the above theorem. For a rigorous argument, similar proof techniques can be adapted from the contexts of LMS \citep{LMS2} and off-policy actor-critic learning \citep{two_time_scale_shangtong}, which are both based on general two-time-scale stochastic approximation \citep{Konda_2004}. Note this analysis is independent of a specific meta-gradient step-size adaptation method as long as the step-size converges to $\alpha^*$. 
This two-time-scale analysis tells us two things about the meta-gradient approach of adapting a scalar step-size.
\begin{tcolorbox}
(A1.) The meta-gradient approach of scalar step-size adaptation cannot converge faster than the linear rate for quadratic optimization as given in the above theorem. 

(A2.) This meta-gradient approach suffers from the ill conditioning of the problem too, just like the gradient descent with a  constant step-size. 
\end{tcolorbox}
Regarding (A1.), although this rate is slightly faster than gradient descent with the optimal constant step-size, in fact, this linear rate is not even reachable by this meta-gradient step-size adaptation because the gradient descent procedure updating $\alpha$ will spend extra time to find the optimal step-size $\alpha^*$. Thus for L-smooth and strongly convex functions, meta-gradient step-size adaptation methods for adapting a scalar step-size converge slower than this rate too. 

Regarding (A2.), consider $\mu\to 0$, then the rate gets close to $1$, which induces very little error reduction per iteration. One may use the ``preconditioning step-size'' like \citep{ptd_yao} to solve the ill conditioning by letting the gradient descent update of the step-size converge to it. These two findings are the first understanding of the convergence rate of meta-gradient step-size adaptation methods that we know.

\subsection{Meta-gradient}
Note IDBD is also a functional approach, besides being a momentum approach for step-size adaptation. IDBD minimizes the mean-squared-error for LMS for its step-sizes by gradient descent \citep{sutton1992adapting}, just like the normal parameters. 
This approach, using gradient descent to minimize a loss function of the step-size function and simultaneously using another gradient descent procedure to minimize the normal loss function of the parameters, appears to be a popular approach for step-size adaptation, e.g., see our reviews in Section \ref{sec:idbd} and Section \ref{sec:hd}. Beyond step-size adaptation, this idea can be used to adapt other many hyper-parameters, e.g., see \citep{bengio2000gradient} for the case that the hyper-parameter loss function is quadratic and the linear regression problems for stock price prediction.  \cite{hyperparam_gd} further used gradient descent to adapt more training aspects of neural networks, including step-size and momentum schedules, weight initialization distributions, regularization schemes, and even neural network architectures. \citet{bengio2000gradient} also discussed that optimizing the many hyper-parameters may induce more over-fitting, for which there is little progress of answering this question.

Beyond hyper-parameter adaptation, this basic working principle of step-size adaptation such as represented by IDBD has a close relationship to a broad research topic that is normally referred to as ``learning-to-learn'' \citep{thrun1998learning} or ``meta-learning'' \citep{vilalta2002perspective}, which means learning is across tasks whilst each task has some form of base-level learning. 
Meta-learning is focused on generalization across tasks, in particular, by extracting {\em meta-data} from prior tasks and transferring the knowledge to new tasks. Thus it is important that the tasks have some sort of similarity.
The key difference of meta-learning from multi-task learning and ensemble learning is that the latter two tasks do not normally use experience from prior tasks \citep{vanschoren2018meta}. Meta-learning also emphasizes the ability to generalize from task to task with very few samples \citep{thrun1998learning}.  

Recent progress in meta-learning features in a spectrum of gradient descent approach to meta-learning \citep{santoro2016meta,finn2017model,rusu2018meta,rajeswaran2019meta,yin2019meta,lee2019meta,khodak2019adaptive,javed2019meta}.
Traditionally, there are many other approaches to meta-learning besides gradient descent methods. Presumably, the success of deep learning establishes the principle of {\em differentiable and evaluative function approach} to artificial intelligence, by which we mean as long as a complex learning goal can be specified by a differentiable function that can be evaluated on data, then it can be learned by a deep neural networks trained with gradient descent. This means the gradient descent approach to meta-learning is particularly important, backed up by the success of deep learning. It is supposedly much easier to train, test and maintain such systems for practitioners than the rule-based meta-learning methods such as in \citep{chan1993experiments}. 

The gradient descent approach for meta-learning is normally called {\em meta-gradient}, though meta-gradient can be used for other tasks too.  
Every meta-gradient method starts with a set of algorithmic parameters,  task-specific parameters, or task generalization parameters, which are defined according to the problem context. The {\em algorithmic parameters} refer to those that are used by algorithms such as the step-size for gradient descent, and the $\lambda$ factor in the return for a reinforcement learning agent. {\em Task-specific parameters} are those that are particularly important to perform well in an individual task. {\em Task generalization parameters} are the parameters that aim to generalize across tasks, standing for the commonality in tasks. 
Recall that for the case of step-size adaptation methods as we see in our reviews, there are two losses, one loss is the normal one that is induced from the learning parameters, and the other is from the step-size, the algorithmic parameter(s).
Note both losses stem from the same loss function, which is $f$, the loss function of the system. We've seen in the gradient-based step-size adaptation methods such as IDBD and HD that the key is to model how the loss changes with respect to the step-size. The learning parameters are used as a bridge for expressing the loss with respect to the step-size via the gradient descent update. 
This idea is not limited to the step-size and it is surely generalizable, which means for these three classes of parameters, as long as we can express the loss with respect to them, we can perform gradient descent to learn them. Meta-gradient methods can be applied to a combination of these parameters. For example,
\citet{meta-gradient-rl} used gradient descent to adapt the discount factor of the MDP problem and the $\lambda$ factor of the return which is the eligibility trace factor of TD methods \citep{sutton2018reinforcement}. MAML used gradient descent procedures to adapt the task-specific parameters and task generalization parameters \citep{finn2017model}. 
To make the discussions easier, in the following, we call the loss function for these three classes of parameters that are not the normal learning parameters the {\em meta loss}.

Here we briefly review MAML \citep{finn2017model}, especially from the way how the problem is parameterized by the meta-parameters and how the normal loss and the meta-loss functions are formulated.  
MAML has both task-specific  parameters and task generalization parameters. The goal is to generalize across tasks with few samples from each individual task since MAML is a meta-learning algorithm. To achieve this, each task $i$ has an individual parameter vector $\theta^{(i)}$, and there is also a task generalization parameter vector $\theta$ that is supposedly shared by all tasks in order to extract the commonality. At each iteration, MAML visits the set of tasks according to a task distribution. For a task $i$, it samples a number of training data, and uses them to evaluate the gradient of the loss and then update $\theta^{(i)}$ by gradient descent
\[
\theta^{(i)} = \theta^{(i)} - \alpha f'(\theta^{(i)}|X_i, Y_i).
\]
Importantly, $\theta^{(i)}$ is initialized to $\theta$, the task generalization parameter vector; Here $(X_i, Y_i)$ are the samples for task $i$, and $\alpha$ is the step-size. If there is only one task, this just reduces to the normal gradient descent for supervised learning and $\theta$ is just the set of the normal learning parameters. In the general case with more than one tasks, the task-specific loss of MAML is, $f(\theta^{(i)} | X_i, Y_i)$, i.e., the loss evaluated on the task-specific samples and adapted with a small number of steps from the task generalization parameters.  
{\em The idea of MAML is supported by the widely adopted practice of model fine tuning}. Deep neural networks are expensive to train due to the large training datasets and the slowness of gradient descent. So once a deep neural model is trained, it becomes an asset that is usually shared on the web, and then downloaded by someone and fine tuned for an individual application using data therein to perform a usually small number of steps of gradient descent. This way, often effective, enables deep learning practitioners quickly building a decent classifier for their new applications without much training effort and resource. The second loss in MAML is the meta-loss, which is the sum of the individual loss from each task, using the task-specific parameter vector and samples, $f(\theta) = \sum_{i} f(\theta^{(i)} | X_i, Y_i)$, which can be weighted by the distribution over tasks. 
So the update of $\theta$ is simply a gradient descent procedure minimizing $f(\theta)$. 

In a short summary, meta-gradient can be used for step-size adaptation as discussed in Section \ref{sec:idbd} and Section \ref{sec:hd}, meta-learning as discussed in this section, and reinforcement learning for which readers can refer to \citep{meta-gradient-rl,meta-gradient-questions,meta-gradient-sutton}. For more comprehensive coverage on meta-learning, see \citep{vanschoren2018meta,nichol2018first,finn2019online,hospedales2020meta}.

\section{Step-size Planning}\label{sec:step-size-planing}
In this section, we introduce our first step-size planning algorithm Csawg. 
{\em Csawg} stands for {\textbf c}omponent {\textbf s}tep-size {\textbf a}daptation based on {\textbf w}eight difference and squared {\textbf g}radient. 
It is shown in Algorithm \ref{alg:Csawg}.

\begin{algorithm}[t]
\begin{algorithmic}
\State /* This procedure computes a diagonal step-size to speed up SGD 
\State $\gamma$: a scalar step-size in SGD 
\State $B_1, B_2$: buffers for holding recent weight update data that are $K$ steps away */
\State 
\For{$k=1, \ldots, T$}
    \State $w_k \gets w_{k-1} - \gamma g_k$  \quad \quad /* SGD */
    %\State Put $(w_k, g_k)$ into $B_1$
    \If{$k \le  K$}
        \State Put $(w_k, g_k)$ into $B_1$ 
    \Else 
        \State Put $(w_k, g_k)$ into $B_2$ 
    \EndIf
    
    \If{$len(B_2) == K$} \quad \quad 
    \State /*Note $B_1=\{(w_1, g_1), \ldots, (w_{K}, g_{K})\}; B_2=\{(w_{K+1}, g_{K+1}), \ldots, (w_{2K}, g_{2K})\}$ */
        \State $sum_{1}, sum_{2} = 0$
        \For{$s =1, \ldots, len(B_1)$}
            \State $sum_{1} \mathrel{{+}{=}} g_s \odot (w_s - w_{s+K} )$\quad \quad /*  element-wise product */
            \State $sum_{2} \mathrel{{+}{=}} g_s \odot g_s$ 
        \EndFor
        \For{$i =1, \ldots, len(w)$}
        \If{$sum_2(i)==0$}
            \State Set $\alpha(i) =0$ %\quad \quad \quad /* no projection for this component now */
        \Else
            \State Compute $\alpha(i) = \frac{sum_{1}(i)}{sum_{2}(i)}$
        \EndIf
        \EndFor
        
    \For{$i=1, \ldots, len(w)$}
        \State $w(i) \gets w(i) - \alpha(i) g_k$ %\quad \quad /* this is a prediction */
    \EndFor
    
    \State $B1 \gets B_2$
    \State Emptify $B_2$ 
        
    \EndIf

\EndFor

\end{algorithmic}
\caption{Csawg Algorithm: {\bfseries C}omponent {\bfseries s}tep-size {\bfseries a}daptation based on {\bfseries w}eight difference and squared {\bfseries g}radient.}
\label{alg:Csawg}
\end{algorithm}

Csawg is a novel way of speeding up convergence for gradient descent.  
The goal is to optimize a function $F: \real^d \to \real$. Assume there is a random function $f: \real^d \to \real$ which we have access to, and $Ef(x)=F(x)$, for all $x \in \real^d$. 
This covers typical settings like deep learning and deep reinforcement learning, in which the true loss function is not given but revealed step by step on mini-batch samples. 
To introduce the algorithm, we start with the error of the weight.
Assume $g_k=f'(w_k;x_k)$ is the stochastic gradient of the loss for a sample $x_k$. 
We consider the {\em diagonal SGD} update:
\begin{equation}\label{eq:diagstep}
w_{k+1} = w_{k}- \alpha_k g_k, 
\end{equation}
where $\alpha_k$ is a diagonal matrix with each diagonal entry being the step-size for the corresponding component of $w$. The diagonal step-size form was used by Adam and IDBD, see Section \ref{sec:adam} and \ref{sec:idbd}.

\begin{theorem}
Let $w_k$ be generated by the procedure in equation \ref{eq:diagstep} which uses a diagonal matrix as the step-size.
Assume the convergence point of $w_k$ is $w^*$. 
Then the ideal step-size that minimizes $E \norm{w_{k+1}(\alpha) - w^*}^2 $ is given by
\begin{equation}\label{wstar_step_size}
\alpha_k(i,i) = \frac{E g_k(i) \Bigl( w_k(i) - w^*(i)\Bigr) }{E g_k^2(i)}, \quad i=1, 2, \ldots, d.
\end{equation}
\end{theorem}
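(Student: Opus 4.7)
The plan is to exploit the fact that a diagonal step-size decouples the squared-error objective into $d$ independent scalar problems, one per coordinate, each of which is a simple quadratic in a single variable. Since $\alpha_k$ is diagonal, the update $w_{k+1}(i) = w_k(i) - \alpha_k(i,i)\,g_k(i)$ depends only on the $i$-th entry of the step-size matrix, so the objective splits as
\[
E\norm{w_{k+1}(\alpha) - w^*}^2 = \sum_{i=1}^d E\bigl(w_k(i) - \alpha_k(i,i)\,g_k(i) - w^*(i)\bigr)^2.
\]
Each summand depends on a distinct scalar parameter $\alpha_k(i,i)$, so the joint minimization reduces to $d$ independent scalar minimizations.

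Next, I would expand the $i$-th summand as a quadratic in $\alpha_k(i,i)$:
\[
E\bigl(w_k(i) - w^*(i)\bigr)^2 \;-\; 2\,\alpha_k(i,i)\,E\bigl[g_k(i)(w_k(i) - w^*(i))\bigr] \;+\; \alpha_k(i,i)^2\,E\,g_k^2(i).
\]
Taking the derivative with respect to $\alpha_k(i,i)$ and setting it to zero yields
\[
-2\,E\bigl[g_k(i)(w_k(i) - w^*(i))\bigr] + 2\,\alpha_k(i,i)\,E\,g_k^2(i) = 0,
\]
and solving for $\alpha_k(i,i)$ (assuming $E\,g_k^2(i)>0$) gives exactly the claimed formula. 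The second-derivative test is immediate: the coefficient of $\alpha_k(i,i)^2$ is $E\,g_k^2(i)\ge 0$, so the critical point is a minimum.

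There is no real obstacle here; the only conceptual point worth emphasizing is why the argument works only for a \emph{diagonal} step-size. If $\alpha_k$ were a full matrix, cross-coordinate terms would appear and the scalar decoupling would break, producing instead a matrix normal-equation involving $E[g_k g_k^T]$. The diagonal restriction is precisely what makes the coordinate-wise projection onto the one-dimensional family $\{w_k(i) - t\,g_k(i) : t\in\mathbb{R}\}$ separable. One minor caveat to mention is the degenerate case $E\,g_k^2(i)=0$, in which the objective is constant in $\alpha_k(i,i)$ and any value (e.g.\ $0$) attains the minimum; the algorithm statement already handles this by setting $\alpha(i)=0$ when the denominator vanishes.
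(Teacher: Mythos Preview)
Your proposal is correct and matches the paper's own proof almost exactly: expand $E\norm{w_{k+1}(\alpha)-w^*}^2$ using the diagonal update, observe it is quadratic in each $\alpha(i,i)$, differentiate, and solve. Your write-up is in fact a bit more complete, since you make the coordinate decoupling explicit, check the second derivative, and note the degenerate case $E\,g_k^2(i)=0$.
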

\begin{proof}
Plugging in equation \ref{eq:diagstep},
the expected error of $w$ after the update is 
\begin{align}
E \norm{w_{k+1}(\alpha) - w^*}^2 &=   E (w_{k} - w^* - \alpha g_{k})^T (w_{k} - w^* - \alpha g_k) \nonumber\\
&= E \norm{w_{k} - w^*}^2 - \Bigl(2 E (w_{k} - w^*)^T \alpha g_k  - Eg_k^T\alpha^2 g_k\Bigr). \label{eq:errort+1}
\end{align}
Consider just an individual component. 
Note this is a quadratic function of $\alpha(i, i)$, whose derivative is
\[
-\frac{\partial E \norm{w_{k+1} - w^*}^2}{\partial \alpha(i, i)} =
2 E\left(w_{k}(i) - w^*(i)\right) g_k(i) - 2 Eg_k^2(i)\alpha(i, i). 
\]
Setting this to zero, we obtain the the desired step-size for the $i$th component. 
\end{proof}
Recall that the derivation of Polyak step-size also starts from the error in the weight and it derives a step-size that depends on the loss difference, $f(w_k) -f(w^*)$, by assuming the convexity of $f$, see our review in Section \ref{sec:review}.
Here we don't assume any condition on $f$ except the derivability of $f$, and thus the derived step-size applies to more general classes of loss functions.

\begin{figure}
\centering
\begin{subfigure}[c]{0.49\textwidth}
\centering
    \includegraphics[width=2.4in]{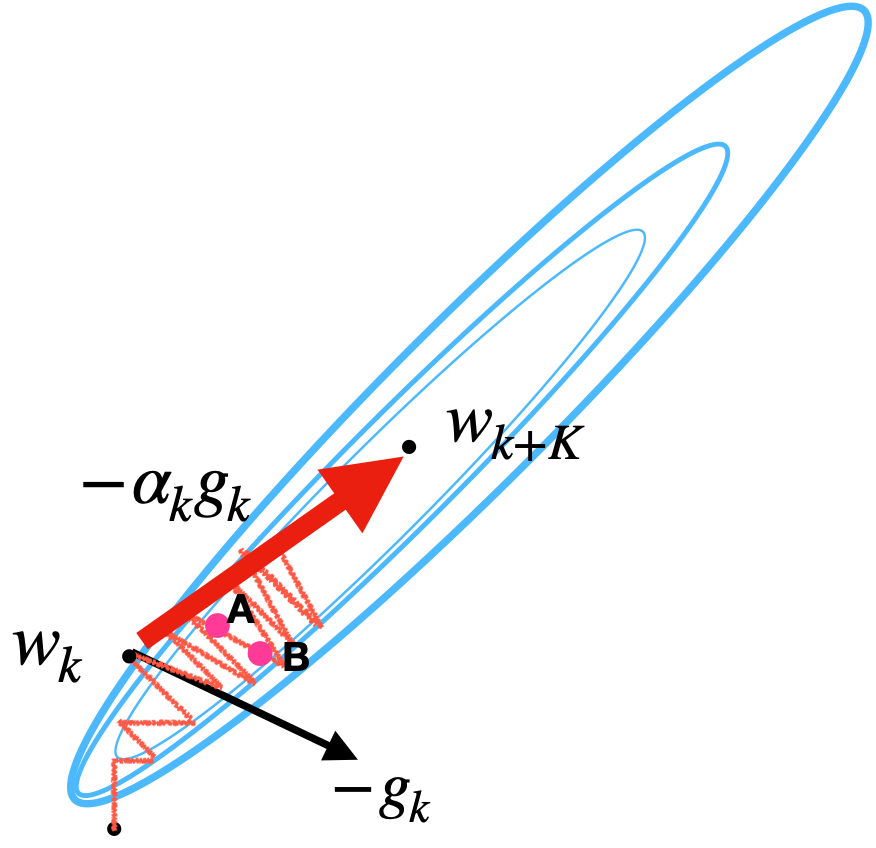}
\end{subfigure}
\hfill
\begin{subfigure}[c]{0.49\textwidth}
\centering
    \includegraphics[width=2.2in]{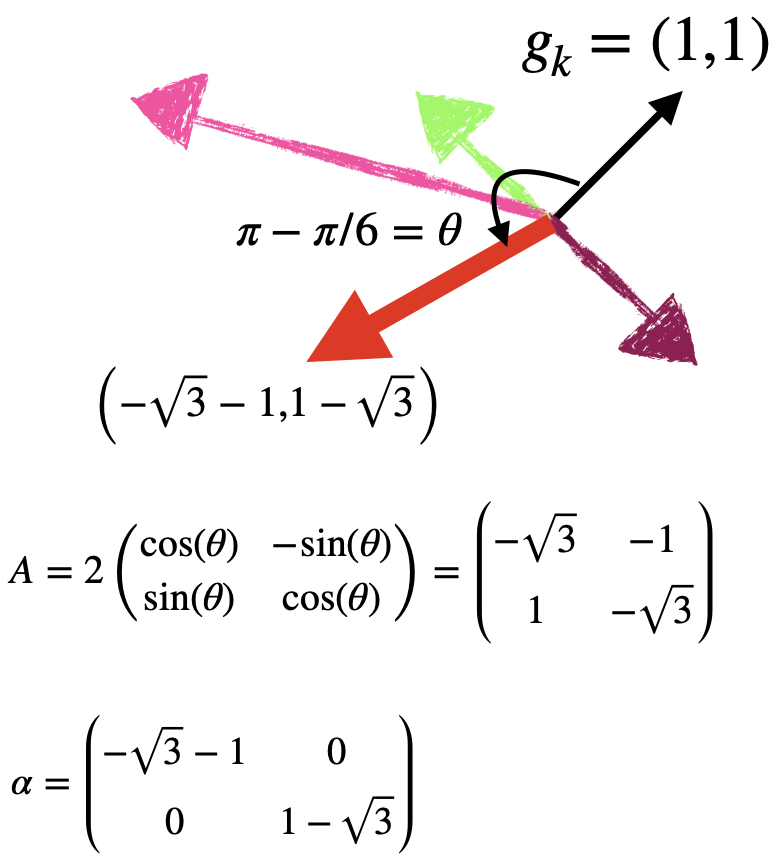}
\end{subfigure}
\caption{Illustration of Csawg. Left: Csawg bootstraps with a diagonal matrix form of step-sizes that ``transitions'' the current weight multiple steps into the future. Right: the diagonal step-size can apply versatile transformations to a gradient vector. See the red bold arrow for the desired vector transformed from the black vector, $g_k$, which can be obtained by projecting a diagonal matrix $\alpha$ instead of a dense matrix $A$.}
\label{fig:Csawg_illustration}
\end{figure}

An interesting observation here is that the step-size has a denominator which is the mean of the {\em squared gradient}, instead of the {\em squared root} of the term as used in Adagrad \citep{adagrad,li2019convergence}, AdaDelta \citep{adadelta}, RMSprop \citep{rmsprop}, ADAM \citep{kingma2017adam}, AMSGrad \citep{amsgrad} and GGT \citep{GGT_svd_lowrank_fullmatrix}. The v-SGD algorithm \citep{schaul2013pesky} has this squared term in the denominator for a quadratic function whose Hessian is diagonal. 
Dividing by the squared gradient makes the step-size more sensitive to the gradient magnitude, increasing the step-size much more for vanishing gradients and decreasing more when the slope is big. Also note that in IDBD, the convergence condition $ \alpha_k(i)\le \frac{1}{x_k^2(i)}$, is the inverse of the squared gradient too.

Note that $E g_k \neq 0$ although $\lim_{k\to\infty} E g_k =0$. Thus unfortunately equation \ref{wstar_step_size} requires the knowledge of $w^*$ in order to compute the step-size. Our idea to circumvent this difficulty is a {\em planning architecture} that updates the parameters occasionally from a multi-step model learned from sampled updates of SGD. To see this, this procedure aims to bootstrap multiple steps ahead of SGD updates:
\[
w_{k+K} \leftarrow w_k  - \alpha_k g_k,
\]
where $w_{k+K}$ is the weight at iteration $k+K$, and the right-side is the weight update by using the diagonal form of step-size.  
That is, we want to advance the normal SGD update with a diagonal-matrix step-size projection, hopefully resulting in a multiple-times faster convergence for SGD. This is illustrated in the sketch on the left of Figure \ref{fig:Csawg_illustration}.

Let us define a sampled loss for the step-sizes:
\[
g(\alpha, w_k; x_k) = \alpha^T G_k \alpha - 2\alpha^T b_k,  
\]
with $\alpha$ being the vector form of the step-sizes,  $G_k$ is a diagonal with $G_k(i, i) = g_k^2(i)$ and $b_k(i) = g_k(i)(w_k(i) -  w_{k+K}(i))$. These two quantities are defined as sampled approximations for the denominator and numerator of equation \ref{wstar_step_size}, respectively. 

We collect $n$ sampled updates at time steps $k_1, k_2, \ldots, k_n$. The empirical loss function for the step-size of these sampled updates is the sum of $g$ for each sample:
\begin{align*}
    G(\alpha, w_{k_j}) &= \sum_{j=1}^n g(\alpha, w_{k_j}; x_{k_j}) \\
    &= \sum_{j=1}^n  \alpha^T G_{k_j} \alpha - 2\alpha^T b_{k_j} \\
    &= \alpha^T G \alpha - 2 \alpha^T b,
\end{align*}
with $G = \sum_{j} G_{k_j}$, which is also a diagonal matrix, and $b = \sum_j b_{k_j}$. Then the loss can be minimized by $\alpha = G^{-1} b$, which is $O(d)$ to compute; recall that $d$ is the number of parameters. Algorithm \ref{alg:Csawg} shows the case when these sampled updates are collected online, with the samples from the most recent updates. 
The SGD update is performed every iteration, and the acceleration update using the diagonal step-size is performed every $K$ iterations.

{\em Zero gradient component}. Note $G$ may be not invertible due to that some diagonal entry of $G$ may encounter zero. We could introduce a positive perturbation in the denominator in computing the component step-size in Algorithm \ref{alg:Csawg}, but this introduces an extra hyper-parameter. Instead, the denominator being zero means all the gradients in the experience is zero --- indicating that the SGD may do a good job for this component at this time step. To decide projection for this weight component, we need more experience from SGD update. Does this component already reach its stationary point, or is it temporary and future SGD update will further take it away from here? We put these questions on hold, and apply no projection to the weight component at this time step until further information comes in.

{\em Negative Step-size}. 
In the vast literature of gradient descent algorithms, it is widely accepted to use non-negative step-sizes in both theoretical analysis and applications. 
For example, the step-size that is positive and diminishes at a certain rate with the iteration number of a stochastic approximation procedure \citep{robbins1951stochastic} is a standard for analyzing machine learning algorithms \citep{bertsekas2000gradient,tsi_td,borkar2000ode,Konda_2004,duchi_2019stochastic,parameter_free_2022making}. Diminishing and small constant step-sizes are widely used in implementing algorithms such as gradient descent algorithms and reinforcement learning algorithms. The diminishing step-size is important to suppress the noises in the stochastic updates. Small constant step-sizes has a similar effect but the converged solution has a bias. 
Faster algorithms without bias were made possible in different contexts, by building certain data structures which enable the use of big constant step-sizes, e.g., see LSPE \citep{lspe96,lspe03} for the case of TD learning, and SVRG for the case of stochastic gradient descent \citep{svrg}.
Line search methods such as Armijo method find the best step-size in a geometric series of positive step-size and it has a wide applications in machine learning and optimization, e.g., see \citep{armijo1966minimization,luenberger1984linear,kolda2003optimization,changshui_linesearch,vaswani2019painless}.

The step-sizes computed by Csawg can be negative at some moments. We don't enforce non-negativeness of the step-size with Csawg though early works like IDBD \citep{sutton1992adapting}, AutoStep \citep{autostep},  Metatrace\citep{meta_trace}, and Tidbd \citep{idbd_td} all used an exponential form of step-size  to guarantee this and SMD \citep{smd_schraudolph1999local} used a thresholding to ``safeguard unreasonably small and negative values (of step-sizes)''. Interestingly, this doesn't prevent Csawg from reducing errors and guaranteeing convergence. One could certainly have different ways of encoding step-sizes to be positive. However, it appears allowing only positive step-sizes limits learning speed and it's not the best to do for our learning update process. Let's take a step back. 

\begin{tcolorbox}
(B.) {\em Why are we restricted to non-negative step-sizes anyway}? The norm of using non-negative step-sizes is presumably a historical inheritance from classical, deterministic gradient descent which often uses a scalar step-size, e.g., see \citet{curry1944method}'s analysis\footnote{Perhaps the name ``step-size'' indicates that the concept was meant to be non-negative, as gradient descent can be viewed as a special Euler method for solving ordinary differential equations in which there is also a ``step'' parameter that determines how far to travel from the last step. The deep learning communities often call the step-size as ``learning rate'', which also indicates non-negativeness. In IDBD, \citeauthor{sutton1992adapting} used an exponential form of step-sizes and he commented that ``it is a natural way of assuring that $\alpha_i$ will always be positive. ''}, and  their wide applications in stochastic approximation \citep{robbins1951stochastic,yin1997stochastic,hanfu_2006stochastic},
optimization \citep{nesterov2003introductory,boyd_book,bubeck_book},
signal processing \citep{ljung1998system},
neural networks \citep{Rumelhart:1986we,simon_book}, pattern recognition \citep{pr_bishop},
machine learning \citep{hastie2009elements}, neuro-dynamic programming \citep{bertsekas1996neuro}, reinforcement learning \citep{sutton2018reinforcement}, deep learning \citep{lecun2015deep,goodfellow2016deep},
deep reinforcement learning \citep{mnih2015human,silver2016mastering,dist_rl,li2017deep,arulkumaran2017brief,franccois2018introduction};
etc.

\quad\quad In the stochastic setting, gradient descent direction can be very bad. To reduce errors fast, when a stochastic gradient is poor (e.g., not in an acute angle but instead in an obtuse angle to the gradient of the underlying true loss function), a positive step-size for the update will actually do gradient ascent instead! Thus if this happens, negative step-sizes can lead to faster convergence for SGD. 

\quad\quad Even in the deterministic setting, traveling always in the negative gradient direction easily leads to oscillation, especially for ill-conditioned problems, noting that the negative gradient does not necessarily point to where the minimum is located. 
Allowing both negative and positive values in the diagonal step-size enables the update to travel in the positive gradient direction along which oscillation happens and travel in the negative gradient along which the minimum is located. 
\end{tcolorbox}
Check the loss contour in Figure \ref{fig:Csawg_illustration} (Left) for an understanding of why we need negative component step-sizes even in the deterministic setting. Note the oscillation in the update causes our estimation of $w$ to lose progress because of the backward motion in the negative x-axis or negative y-axis from time to time. For example, in the figure, the update from point $A$ to point $B$ makes progress in the x-axis but it also makes a backward motion in the y-axis. Let's say the x-axis is $w(1)$ and y-axis is $w(2)$. 
When this happens, a negative step-size for $w(2)$ and a positive step-size for $w(1)$ helps the update move consistently towards the minimum. This basically means certain components of the parameter vector need to occasionally ``back off'' and not always go along in the negative gradient direction during update. In other words, {\bfseries with both positive and negative component step-sizes, some parameters go gradient descent while the others go gradient ascent for faster learning in the perspective 
of the sampled loss}.  (This appears to be very weird even to myself as I'm typing down this after illustrating it in an example above.) Importantly, the signs of the Csawg  step-sizes are not fixed for the parameter components and they change from time to time.    

In our experiments, we have a deterministic gradient descent example in which some component step-size learned by Csawg being ``momentarily'' negative (not always negative of course) leads to much faster learning than gradient descent. 
To get some intuition of this, let us examine the loss in equation \ref{eq:errort+1} with the step-size defined in equation \ref{wstar_step_size}. One can show that
\begin{align*}
E \norm{w_{k+1} - w^*}^2 &=   E \norm{w_{k} - w^*}^2  - E g_k^T \alpha_k^2 g_k. 
\end{align*}
Thus if certain step-sizes in $\alpha_k$ are negative, a reduction still incurs because the reduction is dependent on the {\em squared} step-sizes. The case of Csawg which  does not use $w^*$ is less obvious but the intuition is similar.

{\em The expressiveness of diagonal transformation and diagonal step-size}. The diagonal form of step-size for SGD essentially uses an individual step-size for each parameter, which was used for neural networks training \citep{jacobs1988increased} and non-stationary tracking problems \citep{sutton1992adapting}. 
It is tempting to think that the diagonal step-size form is limited in that it assumes updates of different components are independent and one has to resort to a full matrix to handle the correlation in the components. For example, the success of Newton methods in various scientific computing domains may give us such an impression. 
However, a full matrix form of step-size is at least $O(d^2)$ in both storage and computation. This is not affordable for deep learning applications, with recent big networks having hundreds of billions parameters. There are many algorithms in classical numerical analysis and recent advances in deep learning that apply an approximation of the inverse of Hessian to the gradient for fast convergence. Hessian and its inverse are large, and most of the time, dense. Using finite difference approximation of the Hessian vector product does not require a Hessian matrix, but evaluating the product requires sweeping the data set, which is expensive. Some low-rank approximation of Hessian is usually used in standard numerical analysis like BFGS method, which are usually referred to as truncated Newton or ``Hessian-free (HF)''. HF methods solve a linear system $(H+ \lambda I)d = g$ (with conjugate gradient), where $g$ is the gradient, and uses $d$ to update the weight instead of $g$. They usually require the $H$ matrix to be a good approximation of Hessian which needs to be semi-positive definite. 

Recent advances of speeding up gradient descent for deep learning in general have two directions. {\em The first category of algorithms approximate the product of the Hessian matrix with a vector} so that the computation can be fast. 
\cite{vinyals2011krylov} used an approximate Hessian that does not require the matrix to be semi-positive definite, and constructs a Krylov basis to approximate the product. 
The R-operator method \citep{Pearlmutter} can compute the product without the Hessian for functions that are twice differentiable. It has been used to compute the spectrum of Hessian matrix in training deep networks \citep{lecun_hessian,yao2020pyhessian}, but we couldn't find any  result of using this operator in accelerating SGD. 
{\em The second category is about making the inverse of the Hessian matrix more efficient, through block diagonalization or low rank approximation}. \citet{becker1988improving} are probably the first to use diagonal-matrix approximation to the Hessian matrix in neural networks to speed up the convergence of gradient descent. 
\cite{mathieu2014fast} represented an approximation of Hessian with an orthogonal matrix for rotation and a diagonal matrix according the factorization of symmetric matrices so that the inverse of the approximate Hessian can be computed without $O(d^2)$ complexity by using linearithmic rotations. 
K-FAC \citep{Kfac} approximates the blocks of a neural network's Fisher information matrix which needs not to be diagonal or low-rank, by using Kronecker product of two much smaller matrices. The natural gradient \citep{pascanu2014revisiting} is then applied using a block-diagonal and block-tridiagonal approximation of the Fisher matrix. The GGT algorithm \citep{GGT_svd_lowrank_fullmatrix} is based on the full-sized matrix from the auto-correlation of the gradient vectors and uses low-rank approximation of its squared root without explicitly forming the matrix. The key idea is to use a small window of historical gradient vectors to induce a low-rank matrix so that Singular Value Decomposition can be performed on a much smaller matrix whose size is determined by the window. This is interesting because it can be viewed as using the update experience in a special way. 

\begin{figure}[t]
    \centering
    \includegraphics[width=\textwidth]{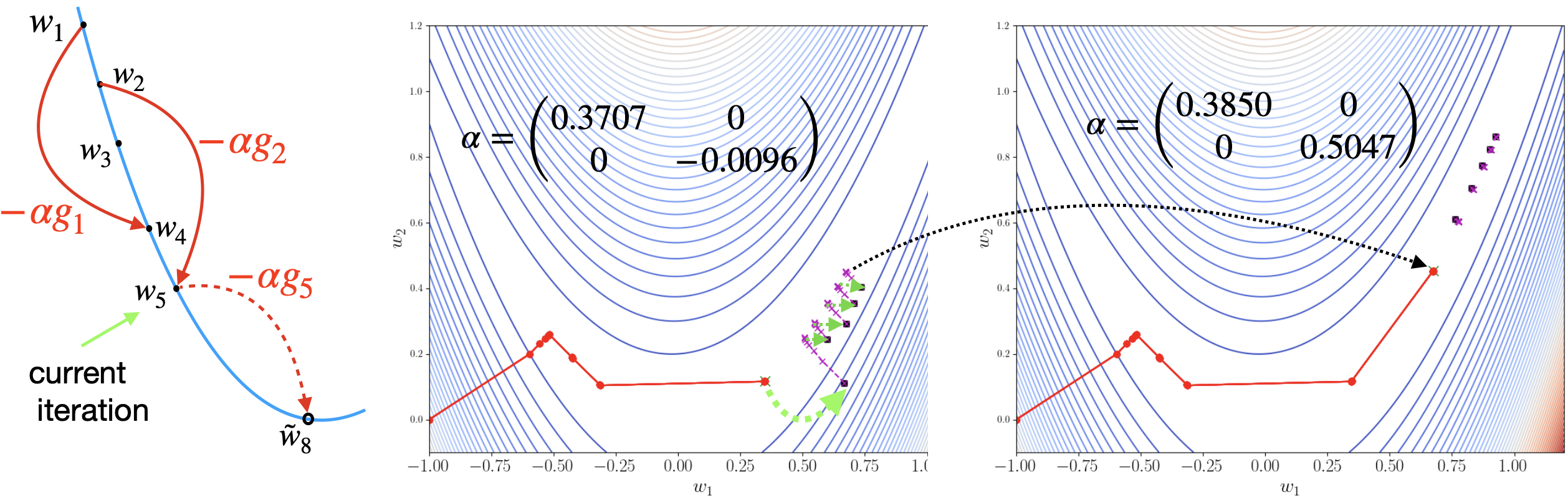}
    \caption{Csawg K3 (with a three-step step-size transition model). Left: illustration of weight differences and diagonal transformation of the gradients to make predictions. Middle and right: diagonal step-size models learned for Rosenbrock function and using them for projection (green arrows): shown are two successive iterations. 
    Note the big step-sizes for component 1, and an negative step-size and a big step-size for component 2 at the two moments, which are both learned from the update data. 
    See the text in Section \ref{sec:multi-step} for detailed explanations.}
    \label{fig:pairing_prj}
\end{figure}

These progresses are by no means a comprehensive coverage but they are examples of promising results of adapting classical second-order methods for deep learning.
Some of them also take advantage of certain diagonal matrices for fast computing of Hessian inverse. However, these diagonal matrices are constructed from the perspective of approximating Hessian, which may limit their transformation power to the gradient vector. 
Diagonal matrix is quite expressive for transformation despite its simplicity, which may be counter intuitive at first sights.
As shown in the right plot of Figure \ref{fig:Csawg_illustration}, to transform a gradient vector (black arrow) to the desired vector (red arrow), from the linear transformation theory one has to apply a rotation and a scaling, which gives a full matrix.  
However, the diagonal matrix can do the same job too, in particular:
\begin{tcolorbox}
(C.) This expressiveness enables the diagonal step-size versatile transformations to ``non-degenerate'' (non-zero) gradient components\footnote{This gives another explanation of why ``every weight in a network should be given its learning rate and that these learning rates should be allowed to adapt over time'' according to \citet{jacobs1988increased} besides being suggested by his four heuristics for step-size adaptation. A single scalar step-size cannot do this job while a full-sized matrix is too heavy for memory and computation purposes.}, changing a non-degenerate gradient vector to any direction with any length, including the orthogonal directions and the opposite direction. 
\end{tcolorbox}
If some gradient component is consistently zero from the experience, it turns out this indicates we may have already reached the stationary point for this weight component, or it is just momentary. 
In either case, we don't need to transform this gradient component at this time of planning. By doing this, we avoid the disastrous division-by-zero error without introducing hyper-parameters, by waiting for more information in the future about this component (e.g., providing a non-zero gradient the next time planning is called for) to apply projection later. See Algorithm \ref{alg:Csawg} and the discussions in the paragraph ``Zero gradient component''.  
With this consideration for degenerate gradient components, the diagonal step-size has the same transformation power of a full matrix for our purpose. Importantly, the storage and the computation of applying it to the gradient is only $O(d)$, which is highly scalable for large scale applications such as deep learning. 

Our intuitive explanation of the projection power of the diagonal matrix and empirical results using it in practice especially deep learning are evidence that the diagonal-matrix step-size seems to be important, but they do not justify confidently to the level that the diagonal-matrix form of the step-size is better than the scalar step-size. 
We question ourselves the following, ``why do we need the diagonal step-size at all? Does it do any goodness that the scalar step-size cannot do?'' There still lacks a mathematical understanding about the role of the diagonal step-size. Below we study these questions by still using the quadratic optimization problem. 

\begin{theorem}\label{thm:meta_grad_diag_alpha}
Consider the quadratic optimization problem as described in Theorem \ref{thm:meta-grad-linear-rate}. Instead of solving the problem with an optimal scalar step-size, let us consider now $\alpha$ is a diagonal step-size. Then we have, at iteration $k$, $\alpha^* = \arg\min_{\alpha \in \mathcal{D}} f(w_{k+1}(\alpha))$, where $\mathcal{D}$ is the space of the diagonal matrices, with
\[
\alpha^*(i, i) = \frac{w_k(i)}{(Q w_k)(i)}, \quad i =1, \ldots, d. 
\]
The convergence rate of using this  $\alpha^*$ is zero, which means it finds the optimal solution in just one iteration. 
\end{theorem}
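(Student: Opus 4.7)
The plan is to exploit that, for a quadratic centered at the origin, the minimizer $w^\ast=0$ has $f(w^\ast)=0$, so any $w_{k+1}$ we can actually drive to $0$ is automatically the global minimizer of $f(w_{k+1}(\alpha))$ over the search space of $\alpha$. The scalar case could not do this because a single scalar has to balance all coordinates simultaneously (as captured in Theorem \ref{thm:meta-grad-linear-rate}); a diagonal $\alpha$ has one degree of freedom per coordinate, so in principle it can zero out every component of $w_{k+1}$ separately.

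First I would write the update in coordinates. Since $f'(w_k)=Qw_k$ and $\alpha$ is diagonal, the update decouples component-wise:
\[
w_{k+1}(i) \;=\; w_k(i) \;-\; \alpha(i,i)\,(Qw_k)(i), \qquad i=1,\dots,d.
\]
Thus $w_{k+1}(i)=0$ iff $\alpha(i,i)\,(Qw_k)(i)=w_k(i)$. Assuming $(Qw_k)(i)\neq 0$, the unique solution is $\alpha^\ast(i,i)=w_k(i)/(Qw_k)(i)$, which is the claimed formula. The degenerate coordinates (where $(Qw_k)(i)=0$) deserve a brief remark: because $Q$ is positive definite, $Qw_k=0$ only when $w_k=0$, in which case $f(w_k)=0$ already and any choice of $\alpha$ works; for an isolated index with $(Qw_k)(i)=0$ one must also check that $w_k(i)=0$ (otherwise that coordinate cannot be zeroed), but this can be handled by noting we seek the \emph{best} diagonal $\alpha$ and in the generic case all denominators are non-zero.

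Second, I would verify that this $\alpha^\ast$ is indeed the minimizer of $f(w_{k+1}(\alpha))$ over diagonal matrices, not just a clever guess. Two routes are available. The clean route: since $Q\succ 0$ we have $f(w)\geq 0$ with equality only at $w=w^\ast=0$, and $\alpha^\ast$ achieves $w_{k+1}(\alpha^\ast)=0$, so $f(w_{k+1}(\alpha^\ast))=0$ is the global minimum over \emph{all} step-sizes, a fortiori over diagonal ones. The calculus route, which matches the style of Theorem \ref{thm:meta-grad-linear-rate}: expand
\[
f(w_{k+1}(\alpha)) \;=\; \tfrac{1}{2}\,w_k^T(I-\alpha Q)\,Q\,(I-\alpha Q)\,w_k,
\]
differentiate with respect to $\alpha(i,i)$, and observe that stationarity reads $(Qw_k)(i)\,\bigl[Q(w_k-\alpha Qw_k)\bigr](i)=0$, which at $\alpha=\alpha^\ast$ gives $Q w_{k+1}=0$, hence $w_{k+1}=0$.

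Finally, the rate follows immediately: $f(w_{k+1}(\alpha^\ast))-f(w^\ast)=0$, whereas $f(w_k)-f(w^\ast)=f(w_k)$ is generically positive, so the per-iteration ratio is $0$ and a single step of diagonal planning solves the problem. There is really no hard step; the only substantive point is the contrast with Theorem \ref{thm:meta-grad-linear-rate}, and it is worth closing with a one-line remark that this exposes the \emph{over-stretching} of a scalar step-size: the factor $1-\tfrac{4\mu L}{(\mu+L)^2}$ for the scalar case arises precisely because a single $\alpha$ must compromise between the coordinates scaled by $\mu$ and those scaled by $L$, whereas the diagonal $\alpha$ removes that compromise entirely.
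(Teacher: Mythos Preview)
Your proposal is correct and takes a genuinely more direct route than the paper's proof. The paper expands $f(w_{k+1}(\alpha))=\tfrac{1}{2}w_k^T(Q-2Q\alpha Q+Q\alpha Q\alpha Q)w_k$, writes the first-order conditions as a linear system $A\alpha_V=b$ with $A(i,j)=Q(i,j)(Qw_k)(i)(Qw_k)(j)$, factors $A=D_kQD_k$ with $D_k=\mathrm{Diag}(Qw_k)$, inverts to get $\alpha_V=D_k^{-1}w_k$, and then separately substitutes back to verify $f(w_{k+1}(\alpha^\ast))=0$. You instead observe immediately that the coordinate-wise update $w_{k+1}(i)=w_k(i)-\alpha(i,i)(Qw_k)(i)$ can be driven to zero by the stated $\alpha^\ast$, and since $Q\succ 0$ forces $f\ge 0$ with equality only at the origin, this is automatically the global minimizer over all (a fortiori diagonal) step-sizes. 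Your argument is shorter and more transparent; the paper's matrix-algebra route exposes the structure of the stationarity system, which could be useful if one wanted to analyze approximate or regularized solutions, but for the theorem as stated your ``achieve the lower bound'' observation is the cleaner proof. Both proofs share the same tacit nondegeneracy assumption that $(Qw_k)(i)\neq 0$ for all $i$ (the paper silently inverts $D_k$; you at least flag the issue), so your treatment of the degenerate case is no weaker than the original.
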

\begin{proof}
Note $\alpha$ is now a diagonal matrix. 
\begin{align}
     f(w_{k+1}(\alpha)) 
     %&= f(w_k-\alpha f'(w_k) ) \nonumber\\
     %&=f(w_k - \alpha Qw_k)\nonumber\\
     &=f((I-\alpha Q)w_k) \nonumber\\
     &=\frac{1}{2} w_k^T (I-\alpha Q)^T Q(I-\alpha Q) w_k \nonumber\\ 
      &=\frac{1}{2} w_k^T (Q-Q\alpha Q) (I-\alpha Q) w_k \nonumber\\
     &= \frac{1}{2} w_k^T (Q-Q\alpha Q -Q\alpha Q +Q \alpha Q \alpha Q)  w_k. \nonumber%\label{eq:quadratic_fQ}
 \end{align}
 It remains to do two things: 1) solving $\alpha^*$; and 2) showing the reduction ratio.  
 Let $y_k= Qw_k$. 
We will solve $\alpha^*(i, i)$ from minimizing $f(w_{k+1}(\alpha))$, for which have
\begin{align*}
f(w_{k+1}(\alpha)) 
&= f(w_k)- \sum_i \alpha(i,i)  y_k^2(i) + \frac{1}{2}\sum_{i,j} (\alpha Q \alpha)(i,j) y_k(i)y_k(j) \\
&= f(w_k)- \sum_i \alpha(i,i)  y_k^2(i) + \frac{1}{2}\sum_{i,j} \alpha(i,i) Q(i,j) \alpha(j,j) y_k(i)y_k(j)
\end{align*}
Let $\alpha_V=diag(\alpha)$, i.e., the vector of the diagonal entries. Then the solution to $0=\frac{\partial f(w_{k+1}(\alpha))}{\partial \alpha}$ is the one that solves $A \alpha_V = b$,
where 
\[
A(i,j) = Q(i,j) y_k(i) y_k(j), \quad b(i)= y_k^2(i).
\]
Note $A$ is also a symmetric matrix.
Let $D_k = Diag(y_k)$, i.e., the diagonal matrix formed by using $y_k$ as the diagonal part. Then 
\begin{align*}
\alpha_V 
&= A^{-1}b \\
&= (D_k Q D_k )^{-1} b\\
&= D_k^{-1}Q^{-1} D_k^{-1} b\\
& = D_k^{-1}Q^{-1} D_k^{-1} (D_k y_k)\\
& = D_k^{-1}Q^{-1} y_k\\
& = D_k^{-1}Q^{-1} (Qw_k)\\
& = D_k^{-1} w_k,
\end{align*}
where the fourth line is because $b=D_k y_k$. This gives $\alpha_V(i) = \alpha^*(i,i)$ in the theorem.  
Now the new error is:
\begin{align*}
f(w_{k+1}(\alpha^*)) - f(w^*)
&= f(w_k)  - \frac{1}{2} w_k^T  Q  \alpha^* Q w_k \\
&= f(w_k)  -  \frac{1}{2} \sum_i \alpha^*(i,i) 
y_k^2(i) \\
&=f(w_k)  -  \frac{1}{2} \sum_i \frac{w_k(i)}{y_k(i)} 
y_k^2(i) \\
&= f(w_k)  -  \frac{1}{2} \sum_i w_k(i) y_k(i) \\
&= f(w_k)  -  \frac{1}{2} \sum_i w_k(i) (Qw_k)(i) \\
&= f(w_k)  -  \frac{1}{2} w_k^T Qw_k \\
&= 0.
\end{align*}
Thus the error turns to zero in just one iteration given an initial $w_0$. 
This completes the proof. 
\end{proof}
Recall that Theorem \ref{thm:meta-grad-linear-rate} also derives the step-size by minimizing the new loss, $f(w_{k+1}(\alpha))$ with respect to $\alpha$. The only change in Theorem \ref{thm:meta_grad_diag_alpha} is that the step-size is now a diagonal matrix instead of a scalar step-size as in Theorem \ref{thm:meta-grad-linear-rate}. 
As Theorem \ref{thm:meta_grad_diag_alpha} shows, this results in a big difference. 
In particular, using optimal component step-sizes for each parameter solves the problem in just one iteration. This is surprising, especially for the general case where $Q$ is not diagonal. This means for faster convergence, it makes sense for meta-gradient methods like IDBD to have an individual step-size for each parameter; while adapting a scalar step-size like  hyper-gradient descent algorithms are slower.  

To develop a quick understanding of this theorem without looking at the proof in details, consider the special case that $Q=diag(\mu, L)$. Now the error reduction ratio is,
\begin{align*}
    \rho_k 
    &= 1- \frac{w_k^T Q \alpha^* Q w_k}{w_k^TQw_k} \\
    &= 1 - \frac{w_k^2(1)\alpha^*(1,1) \mu^2 + w_k^2(2)\alpha^*(2,2) L^2 }{w_k^TQw_k}.
\end{align*}
Note in this case, $\alpha^*(1,1) = \frac{1}{\mu}$ and $\alpha^*(2,2)=\frac{1}{L}$. This is a simple and beautiful fact in the opinion of the author. It is the key to that the diagonal step-size does not suffer from ill conditioning for this fundamental problem. Therefore, 
\begin{align*}
\rho_k 
    &= 1 - \frac{w_k^2(1) \mu + w_k^2(2) L }{w_k^TQw_k} = 0.
\end{align*}
That is, the diagonal step-size finds the optimal solution in just one iteration for this special but fundamental case, independent of the conditioning number $L/\mu$ of the problem. This is in a great contrast to using an optimal scalar step-size as shown in Theorem \ref{thm:meta-grad-linear-rate} (with a linear rate of about $1-\frac{4\mu}{L}$), and gradient descent using a constant step-size (with a rate $1-\frac{\mu}{L}$, e.g. see Section \ref{sec:conv_rate}), and Nesterov's accelerated gradient, with a linear rate of $1-\frac{\sqrt{\mu}}{\sqrt{L}}$, which is the best rate achieved with a constant scalar step-size. 
In the more general case of $d$-dimensional diagonal matrix $Q$, $\alpha^*$ happens to be equal to just $Q^{-1}$, or the diagonal matrix with the entries being the inverse of the eigenvalues of $Q$. 
Regardless of the many methods (like what we reviewed and discussed above) that already employ the diagonal-matrix form of the step-size, this is the first theoretical understanding of why diagonal step-size is necessary, especially for ill-conditioned problems. In the two-dimensional case, the simple fact that one optimal step-size equal to $\frac{1}{\mu}$ and the other equal to $\frac{1}{L}$ shows we cannot achieve the same optimality (i.e., super fast error reduction) with a single scalar step-size. It also explains why the scalar step-sizes are required to be in the range of $[\frac{1}{L}, \frac{1}{\mu}]$, whether a constant step-size, or Polyak step-size\footnote{One can prove that Polyak step-size is in this range for $L$-smooth and $\mu$-strongly-convex functions.}.
In a summary, the message of this theorem is the following:
\begin{tcolorbox}
(D.) The wide practice of using a scalar step-size for gradient descent is not good for fast convergence because it is {\bfseries over-streched}, in that it tries to balance between the smallest and biggest optimal step-sizes of all the parameters for the convergence guarantee, which, however, is much slower due to that each parameter needs an individual optimal step-size. 
\end{tcolorbox}

This is first formal understanding of \citet{jacobs1988increased}'s idea that each parameter should have their individual step-size. It is a well known and long standing question in optimization that gradient descent suffers from the ill conditioning of the loss function, which leads to extreme slowness. 
As shown by this theorem, the diagonal step-size is a promising solution to this problem. 
Step-size adaptation methods gives us less or free effort in hyper-parameter tuning, e.g., see  \citep{mathews1993stochastic,NLMS,hutter2007temporal,vanschoren2018meta,chen2020better}. 
These works adapt a scalar step-size which gives a close-to or matching performance to the tuned step-sizes or the optimal one.
However, the scope of step-size adaption is bigger than just in the tuning efforts. 
For the diagonal step-size, even faster learning results than the best tuned scalar step-sizes have been reported, e.g., in IDBD \citep{sutton1992adapting}, SMD \citep{smd_schraudolph1999local}, TIDBD \citep{idbd_td} and Metatrace \citep{meta_trace}. Although the host algorithms for step-size adaptation are different in these works, they all reported better learning results in their context than the tuned or even the optimal scalar step-size for the host algorithm.  
These results were surprising in a non-intuitive sense and there has been no explanation towards this phenomenon. This theorem considers a basic but important setting and shows that when we allow them to be adapted to minimize the loss, the diagonal step-size is theoretically much faster than the scalar step-size so much so that the representation of the diagonal step-size enables it to find the solution in just one iteration. Certainly meta-gradient methods like those we discussed won't solve their problems in one iteration  because the context of their problems is stochastic and also the meta-gradient needs to find the optimal step-size for each component which takes time. However, this theorem shows that the effort of {\em the parameter gradient descent} is much smaller when the diagonal step-size is used. The major effort is {\em the step-size gradient descent}, which is the learning of the optimal component step-sizes for each parameter using gradient descent.
Therefore, step-size adaptation via the form of a diagonal-matrix step-size is an important acceleration technique for (parameter) gradient descent.  
A deeper understanding of the diagonal step-size adaptation with a convergence rate analysis into the above algorithms in their context is an interesting future work. 

For non-diagonal matrix $Q$, note that $\alpha^*(i, i)$ can be negative. If there is some method that approximates $\alpha^*(i, i)$, then the convergence won't take just one iteration. If $\alpha^*(i, i)$ is negative, this approximation is also likely to be negative. 
Thus this theorem also validates our previous discussion on using negative step-sizes. 

\section{The Data Perspective of Gradient Descent}\label{sec:relation_rl_planning}
Csawg uses the {\em update experience} to learn an improved way of updating the parameters. It organizes the experience into $K$ steps away from each other to facilitate planning. From the past experience, Csawg learns the step-size which is a form of multi-step machine that predicts future updates. In this sense, the idea of Csawg is similar to planning in artificial intelligence, in particular, Dyna \citep{dyna} and linear Dyna \citep{lin_dyna}. In the Csawg architecture, SGD performs online updates and provides experience. Simultaneously, the update experience is collected and used to learn a {\em world model}, which is the step-size, to reason about the multi-step transition between updates.
Using this step-size transition model, the future update is predicted in the background and passed to the online update to teleport SGD. The multi-step transitions via diagonal transformation by the step-size model including data collection for training the model and predicting with it are shown in the left plot of Figure \ref{fig:pairing_prj}. Looking at Algorithm \ref{alg:Csawg}, 
we can see that step-size planning can be viewed as a step-size adaptation technique but with some distinctiveness in that the step-sizes are not adapted constantly but a planned diagonal step-size is applied from time to time, while a normal constant step-size is used in the main loop. This distinguishes step-size planning from existing step-size adaptation techniques such as Polyak step-size, Adam, IDBD and many others such as those discussed in our review, which adapt the step-size constantly every time step. 

Let us discuss the connection of step-size planning to planning in reinforcement learning, in particular, Dyna-style planning architecture in more details here. The first doubt when I came up with the word ``step-size planning'' was, {\bfseries \em Is this planning at all?} You are absolutely well justified to have this doubt: planning is for solving reinforcement learning problem and here Csawg optimizes a loss function with a special gradient descent process. How come Csawg is planning? In reinforcement learning and Markov Decision Processes (MDPs) \citep{sutton2018reinforcement,bertsekas2012dynamic,szepesvari2010algorithms}, the model has a natural definition, which is the transition probability matrix or kernel conditioned on an action, governing the transition dynamics between states or feature vectors. Take linear Dyna for example (see Figure \ref{fig:compare_lin_dyna}), the transition samples between successive feature vectors, $(\phi, \phi')$, are used to build a single-step model, in the form of a (full-sized) matrix, $F$. \footnote{One can treat the reward as a special feature in which case $b$ becomes one added row of $F$. So ignore $b$ for your understanding.}  At planning, a feature vector, $\phi$, is sampled from some distribution (e.g., the uniform distribution)
%as in \citep{lin_dyna}
%and empirical distribution of states as in \citep{multi-step-dyna}) 
and projected to predict the expected next feature vector, based on which Temporal Difference (TD) error is calculated and an incremental algorithm such as TD without eligibility trace is applied. \citet{lin_dyna} established the consistency of planning using i.i.d sampling of feature vectors according to any distribution as long as the distribution has a nonzero probability of sampling each feature and the spectral radius of the transition model is guaranteed to be smaller than one, and demonstrated slightly faster policy evaluation (with mixed results in comparison to TD) than TD for a Dyna algorithm even aided with a prioritized sweeping of parameter updates \citep{moore1993prioritized}. Clear faster learning of linear Dyna was first shown by \citet{multi-step-dyna}, 
in which the key is to use (bigger) constant step-sizes instead of the diminishing step-size in planning because planning is not a stochastic approximation procedure like the online TD learning any more, but instead solving a model, and thus there is no need to suppress noises. 
It is noticeable that for policy evaluation, the transition model linear Dyna uses is the same as a key matrix in least-squares policy evaluation (LSPE) \citep{lspe96}. However, the use of the model in the planning procedure, especially the extension to action-conditioned models is nice in linear Dyna, which can be used to simulate experience, do off-policy learning and solve control problems while LSPE is only for policy evaluation. \citet{lspe03} established the convergence of using constant step-sizes for LSPE without using diminishing ones, which supports the choice of the planning step-size by \citet{multi-step-dyna}. \citeauthor{multi-step-dyna} also extended linear Dyna to multiple steps of planning, which lead to further accelerated learning for both policy evaluation and control without prioritized sweeping. 

Figure \ref{fig:compare_lin_dyna} is a side-by-side comparison of Csawg and linear Dyna. Both algorithms have three parts: learning (from online samples), modeling (from pairs of samples) and planning (from the estimated model). Architecture wise, they are strikingly similar although the problem context is different. Csawg estimates a multi-step model from the samples collected in the buffers while linear Dyna uses the immediate transition samples to build a single-step model. Linear Dyna predicts the expected next feature vector while Csawg predicts the expected weight in multiple iterations. 
Linear Dyna assumes linear function approximation while Csawg deals with a general function approximation. Because of the difference in problem context, learning in Csawg is performed by gradient descent while in liner Dyna it is by TD. This is a minor difference as both are incremental and online algorithms. 

\begin{figure}[t]
\centering
\includegraphics[width=\textwidth]{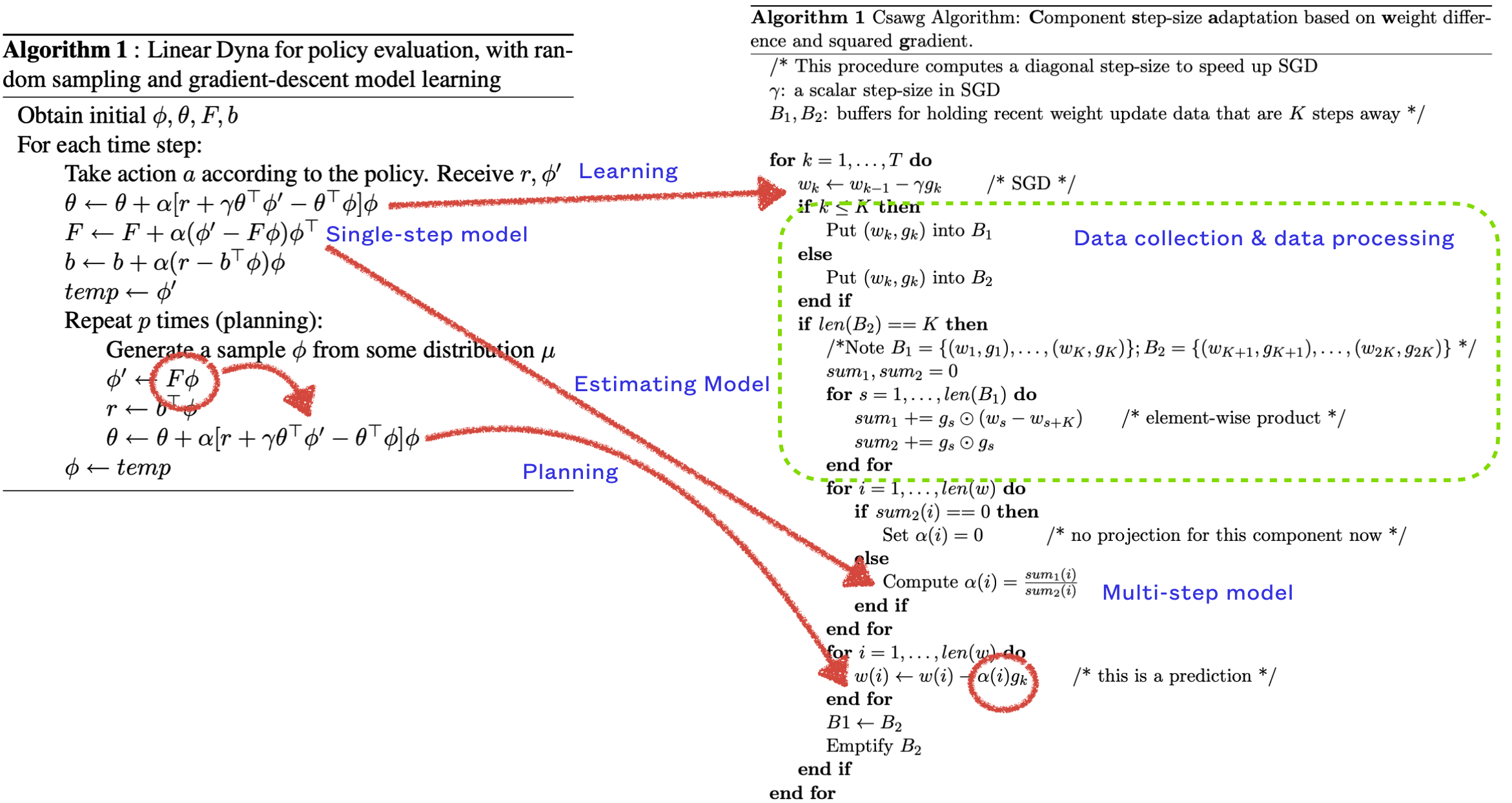}
\caption{Side-by-side comparison of Csawg and linear Dyna.
}
\label{fig:compare_lin_dyna}
\end{figure}

Building an empirical model from data is quite natural for reinforcement learning especially model-based algorithms. For example, besides the model used by linear Dyna, there are other models such as option-conditioned models \citep{sutton1999between}, RKHS MDP \citep{rkhs-mdps}, and pseudo-MDP \citep{yao2014pseudo}, etc.  
 Definitions and estimation methods of models may vary between model-based reinforcement learning algorithms, but they all arise from the inherent model with the MDPs. The situations with gradient descent based optimization in terms of problem formulation, analysis and experimentation have been quite different. In particular, {\em  we don't normally consider gradient descent as a state transition problem}, nor treat the step-size as a multi-step transition model. We don't also normally consider the
gradient descent update is {\em data}, nor use it because it is unknown whether this has any practical benefit.   

\begin{tcolorbox}
(E.) The perspective Csawg brings us is a new understanding of step-sizes, that the collection of them, in the form of diagonal matrix as we explored in the paper, can be viewed as a multi-step transition model, and we can learn them from the update data, take advantage of the learned step-sizes by projecting on our current gradient, and benefit from an accelerated convergence of gradient descent. 
\end{tcolorbox}

This new perspective essentially treats the gradient descent update as a process of generating data. Previously, Hyper-gradient uses the two successive gradients in the update to measure whether currently we are at the same side of the minimum. The IDBD algorithm uses historical gradients to measure the correlation with the current gradient which measures better in stochastic environments. However, this is still not enough. {\em Measuring where we are during the gradient descent} is the ultimate question for step-size adaptation. Only using the current gradient or even the historical gradients does not give a sufficient measure of where we are in the update. It will only be sufficient if we treat the gradient descent update as data. As we have more and more updates, the data reveals more information of where we are. Importantly, the data enables us to {\em learn step-size} and {\em plan with it}, which improves the way of parameter updating. This is key idea of Csawg, which distinguishes it from prior methods.

\section{Experiments}\label{sec:exp}
In this section, we conduct experiments by studying the performance of the Csawg algorithm for deterministic gradient descent.

\begin{figure}
\centering
\begin{subfigure}[c]{0.49\textwidth}
\centering
    \includegraphics[width=3.3in]{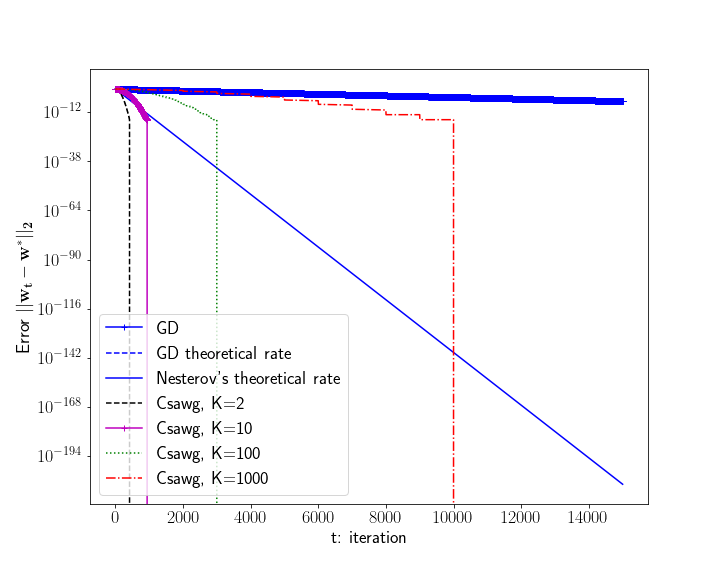}
    \caption{Convergence rate comparison.}
    \label{fig:q2_algs}
\end{subfigure}
\hfill
\begin{subfigure}[c]{0.49\textwidth}
\centering
    \includegraphics[width=3.2in]{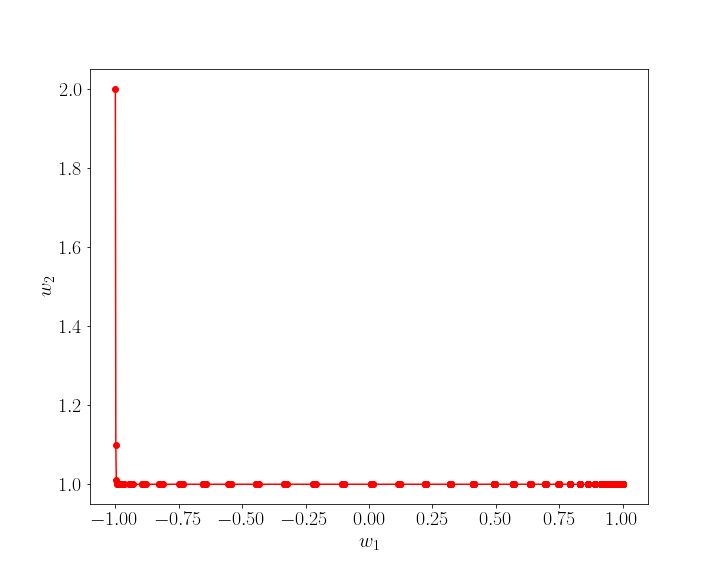}
    \caption{Update trace of Csawg. }
    \label{fig:q2_trace}
\end{subfigure}
\caption{Convex function. (a): Convergence rate of GD, Nesterov's accelerated gradient, and Csawg algorithms. (b): Update trace by Csawg K10 for the convex problem. The dense dots are the updates by the online GD and the extended straight lines that connect them are the updates by planning, which bootstraps in the direction of $w^*$. 
}
\label{fig:q2_err_trace}
\end{figure}

%\subsection{Convex}\label{sec:exp_convex}
{\bfseries Convex}. 
The first problem is a convex function, $f(w) = \frac{1}{2}(w-w^*)^T Q (w-w^*) $, where $Q=\begin{pmatrix}
1000& 0\\
0 & 1
\end{pmatrix}
$, and $w^*=[1, 1]$. 
All compared algorithm started from $w_0=[-1, 2]$, and their performances are shown in Figure \ref{fig:q2_algs}. The theoretical rate of GD for this problem is $1- \frac{\mu}{L}$, where $\mu=1$ (the minimum eigenvalue of $Q$) and $L=1000$ (the maximum eigenvalue). This is achieved with a step-size smaller than but close to $1/L$. The GD algorithm used a step-size $0.00099$, which matches this rate closely, as shown by the overlapped blue lines at the top of the plot. The problem is ill-conditioned and the convergence rate of GD is very slow, which is $O(0.999^t)$.
The convergence rate of Nesterov's accelerated gradient and heavy ball is $1- \sqrt{\frac{\mu}{L}}$ \citep{nesterov2003introductory}, which is about $O(0.968^t)$ in this case, much faster than GD. 

Our Csawg used a step-size $0.0009$ for its GD, which is smaller and less optimal than GD. The performances of using different $K$s are studied and shown in the figure. 
Interestingly, Csawg $K=2$ (shortened for Csawg K2) performed the best, reducing the error to zero in less than 500 iterations. 
Csawg K10, K100, K1000 eventually also reduced the error to zero but they are slower than Nesterov's accelerated gradient in the beginning of the iterations.
The sharp dipping in the errors of Csawg algorithms indicates that the algorithm's convergence rate ``may be'' orders higher than Nesterov's accelerated gradient and heavy ball method.

Figure \ref{fig:q2_alpha} shows the step-sizes learned by Csawg for different $K$. Some interesting observations are as follows. (1) For component one, the step-size increases first and then decreases (sharply) to zero.
(2) The peak of all the step-sizes for component one is {\em exactly} one eventually, which is super interesting. 
The moment that the step-size being one means that the gradient is in the right direction towards $w^*$ and with a perfect magnitude just matching the current distance to the optimal solution. That is, $g_k(i) = w_k(i) -w^*(i)$.  
(3) What is intuitive reason that the algorithm is fast? Figure \ref{fig:q2_trace} shows the update trace of Csawg K10. 
It shows the first component is hard to learn; and Csawg helps by its step-size that transitions multiple steps ahead towards $w^*$, especially the first component. The update is surprisingly smooth and fast --- for this problem, many step-size adaptation algorithms especially a single step-size shared for both components would expect oscillatory behaviors or slow learning. 
(4) The step-size adaptation is component dependent. 
There wasn't much adaptivity in the second component, from the second plot of Figure \ref{fig:q2_alpha}. 
This interesting selective behavior by the step-size shows that Csawg is suitable for detecting learning difficulty of parameters in learning. Sparse and pulse signals in component step-sizes learned by Csawg like shown in Figure \ref{fig:q2_alpha2} is an indication that the learning of this component is easy, while a dense step-size signal like Figure \ref{fig:q2_alpha1} indicate that the learning of the component needs lots of adjustments in step-sizing and the learning is hard.  
% Finally note that when the step-sizes converge to zero, planning stops but the SGD still continues in Csawg, which is why the error still continue to decrease afterwards unless the optimal solution is reached. -- I think it just stopped for this example.   

{\bfseries Non-convex}. 
Rosenbrock function is a challenging problem for gradient descent because the minimum hides in a long and narrow valley, which is created by the large ratio between the coefficients of the two composite terms:
\[
f(w) = (w_1 -1)^2 + 100 (w_2- w_1^2)^2.
\]
We run gradient descent (GD) and Csawg, both starting from $w_0= [-1, 0]$. 
The results are shown in Figure \ref{fig:rosenbrock}. 
Interestingly, $K=2$ leads to the fastest convergence for Csawg, even faster than $K=5$. With our derivation, one may expect that Csawg K2 will be two times faster than GD because it only looks two steps ahead. However, it appears that the algorithm is much faster than that (see the caption of the figure for details). This is due to that Csawg K2 uses a short history of the update and the step-size is learned from the updates moments ago, which gives highly accurate predictions for temporal neighborhoods of the update. This extremely short history helps because the update close to the convergence point ($w^*=[1, 1]$) easily shoots to the other side of the valley. 
%In the case of online learning, algorithms are usually compared at the same number of time steps. 
%In this comparison setting, Csawg $K=2$ is $1,414$ times faster than GD, where the ratio is between the errors of the two algorithms taken at iteration $10,000$.   

\begin{figure}
\centering
\begin{subfigure}[c]{0.43\textwidth}
\centering
    \includegraphics[width=2.8in]{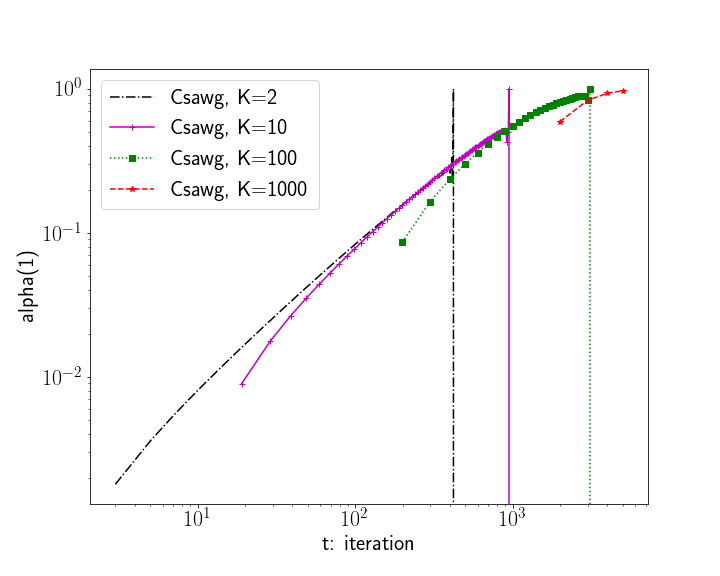}
    \caption{$\alpha(1)$}
    \label{fig:q2_alpha1}
\end{subfigure}
\hfill
\begin{subfigure}[c]{0.43\textwidth}
\centering
    \includegraphics[width=2.8in]{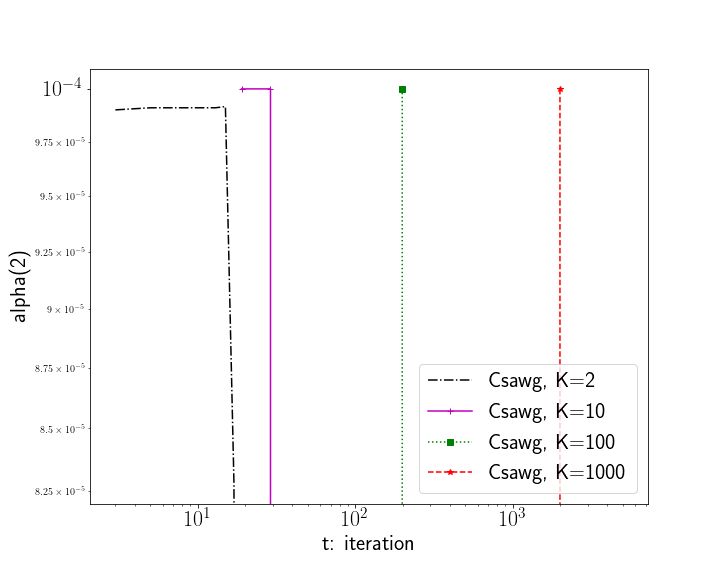}
    \caption{$\alpha(2)$}
    \label{fig:q2_alpha2}
\end{subfigure}
    \caption{Step-sizes learned by Csawg for the convex function. For $\alpha(1)$, all Csawg $K$ algorithms have an increasing stage.
    Notably, Csawg K2, K10 and K100 increased to exactly 1, and Csawg K1000 increased to about 0.97 in the figure. Give more iterations for Csawg K1000, $\alpha(1)$ also increased to exactly 1. 
    All the step-sizes eventually dropped to zero (not shown because of log in the y-axis), and this happened at the sharply dropping moments shown by the plot. 
    }
    \label{fig:q2_alpha}
\end{figure}

Figure \ref{fig:rosenbrock_csawg_alpha2} shows how one component step-size, $\alpha(2)$ by Csawg%\footnote{The other component step-size is similar and thus omitted here.}
, changes during iterations. For Csawg K5, the step-size converges in the end. While for Csawg K2 and K10, the step-size does not have a convergence point. 
However, Csawg K2 shows a faster convergence rate.
This was counter intuitive to us at first. 
Having a convergence point may appear to be more advantageous because the final update can be more stable. Although the behavior of the step-size looks very dynamic and oscillatory, it does not prevent a smooth and consistent convergence for $w$. Furthermore, note that there is a significant percentage of times where the step-size is negative. Figure \ref{fig:rb_traces} shows the update traces by GD and Csawg. Csawg has a bigger step between the points, showing bootstrapping is effective. The trace of Csawg shows that occasionally planning can bring the update a bit off track, but GD brings it close back. Thus this shows online GD is an important element of Csawg.

\begin{figure}[t]
\centering
\begin{subfigure}[c]{0.43\textwidth}
    \centering
    \includegraphics[width=2.7in]{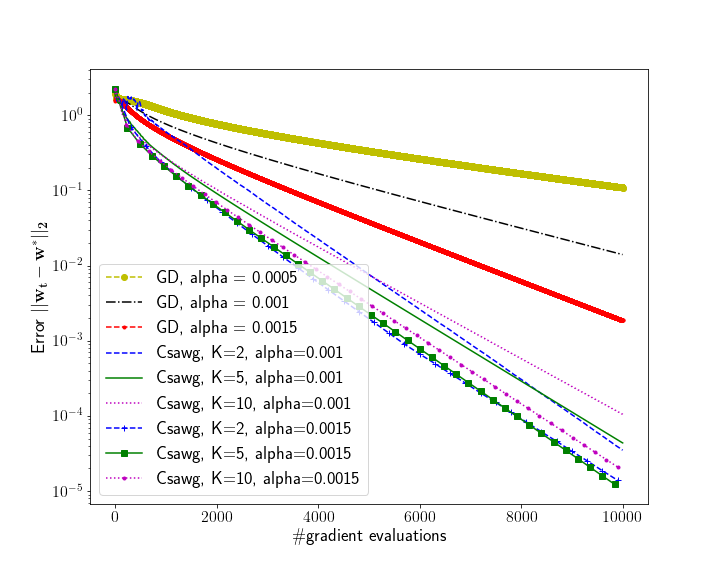}
    \caption{Convergence rates}
    \label{fig:rosenbrock}
\end{subfigure}
\hfill
\begin{subfigure}[c]{0.43\textwidth}
    \centering
    \includegraphics[width=2.7in]{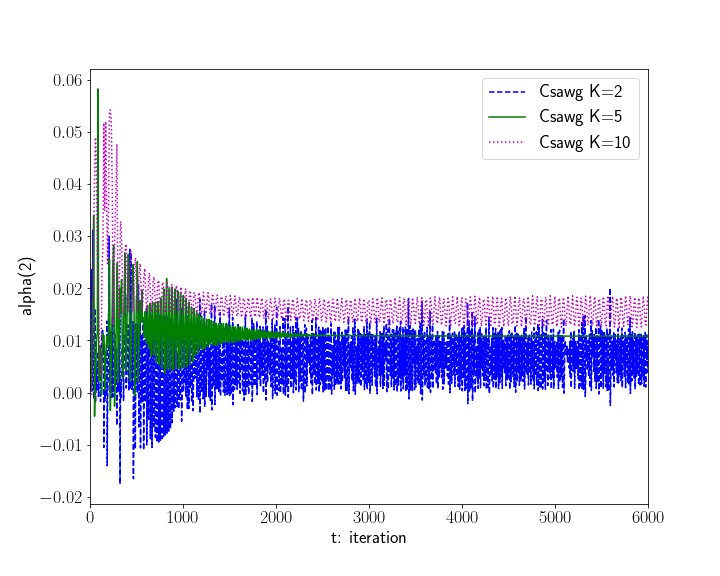}
    \caption{Step-size $\alpha(2)$ learned by Csawg.}
    \label{fig:rosenbrock_csawg_alpha2}
\end{subfigure}
\caption{Rosenbrock function. (a) Convergence rate comparing Csawg vs. gradient descent (GD). 
    The step-size for GD was selected from $\{0.0005, 0.001, 0.0015, 0.002\}$, and $0.001$ performed the fastest and stably for the first 20,000 iterations.
    For Csawg, we compared using $\alpha=0.001$ and $\alpha=0.0015$ for its GD. The plot shows that even Csawg with un-optimal step-size $\alpha=0.001$ for its GD performed significantly faster than the GD with the best step-size for all $K=2, 5, 10$. For 10,000 gradient evaluations \protect\footnotemark,
     the speedup over GD with the best step-size is, $400/320/133$ times for $K=2/5/10$. The speedup was measured by the ratio between the error of GD and that of Csawg. The compared two algorithms with this ratio have the same number of gradient evaluations in computing the ratio. (b) Step-size $\alpha(2)$, which is learned by Csawg K2, K5 and K10. Note the many moments of negative step-sizes for K2. 
}
\label{fig:rb_err_alpha}
\end{figure}

\footnotetext{For GD, the number of gradient evaluations is the same as the number of iterations. For Csawg, the number of iterations is smaller because there is also gradient evaluation in planning.}

\begin{figure}
\centering
\begin{subfigure}[c]{0.43\textwidth}
\centering
    \includegraphics[width=2.5in]{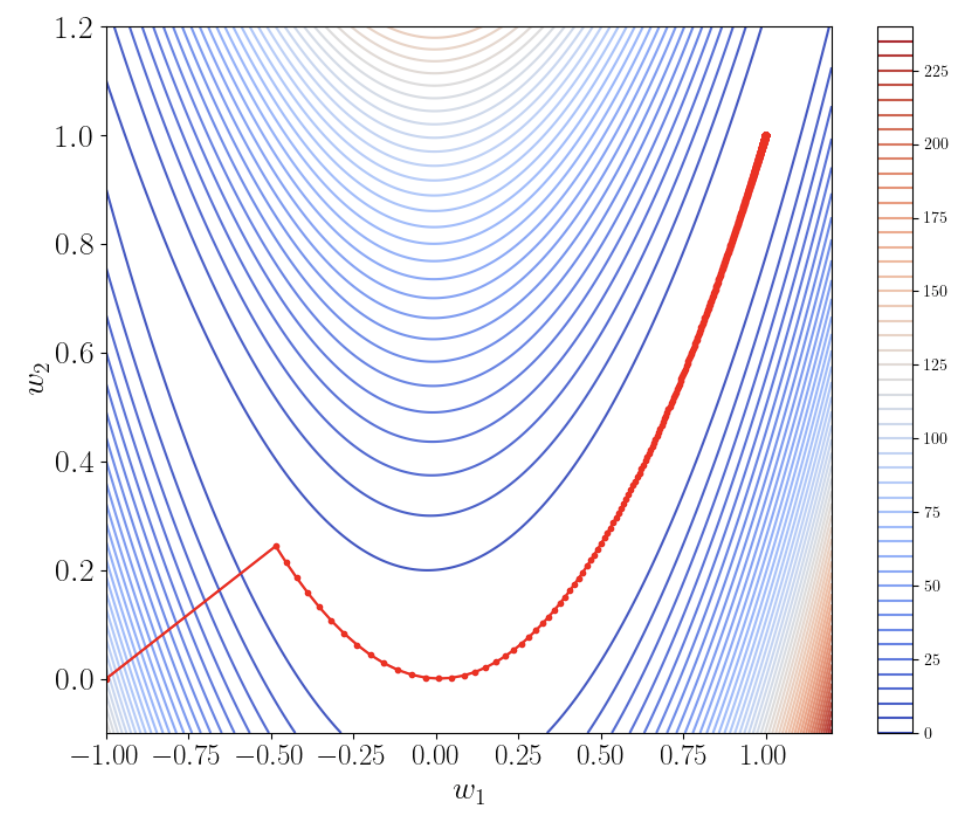}
    \caption{GD update trace}
    \label{fig:rb_gd_trace}
\end{subfigure}
\hfill
\begin{subfigure}[c]{0.43\textwidth}
\centering
    \includegraphics[width=2.5in]{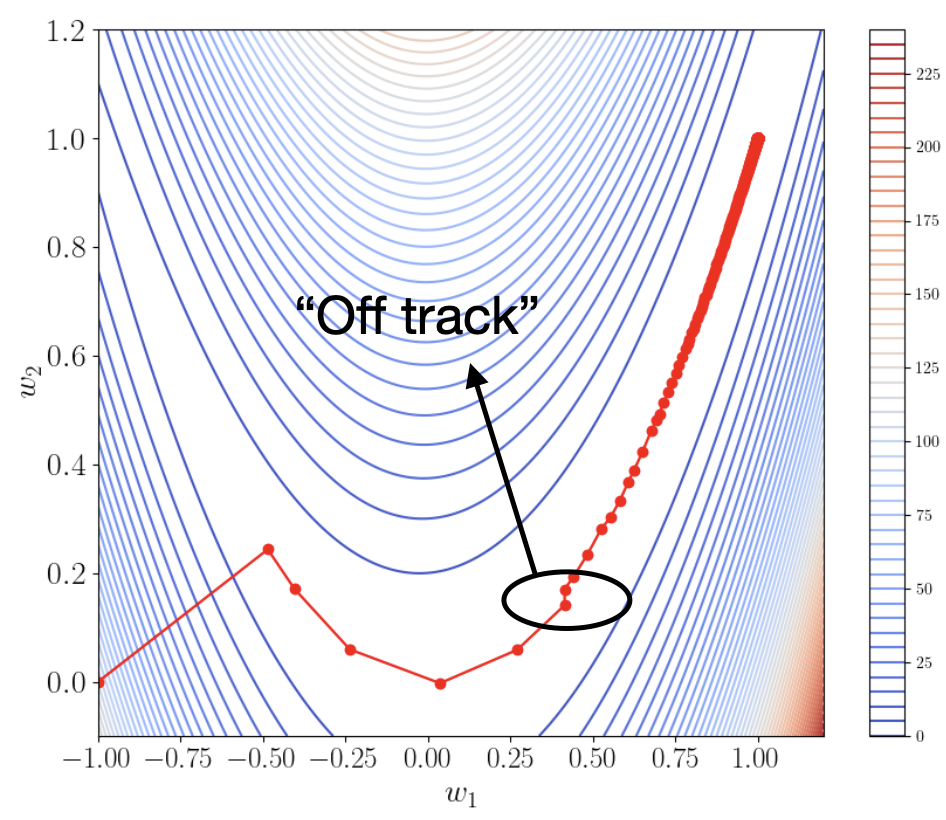}
    \caption{Csawg K10 update trace.}
    \label{fig:rb_trace_K10}
\end{subfigure}
    \caption{Update traces on Rosenbrock function, shown is a sparse plot with dots being 20 iterations away for both algorithms. The right plot shows Csawg can occasionally update away from the desired path. 
    }
    \label{fig:rb_traces}
\end{figure}

\section{Repeated Step-size Planning} \label{sec:multi-step}
If we have a good transition model, we should be able to 
apply it multiple times at one moment to gain even more speedups. While this appears to be a natural idea, it does not work well in a straightforward sense. The ``off-track'' behavior in Figure \ref{fig:rb_trace_K10} magnifies when applying a step-size model for repeated planning. 
Figure \ref{fig:pairing_prj} middle plot shows at the third time when planning is applied, the step-size mainly moves the weight projection to the right, since this is the direction in which the update is so far (red dots are mainly going to right until this time). The green big green arrow shows the first time applying the step-size for projection in planning, which however goes out of the valley. 
To bring the update back to track and stabilize training, we apply a number of SGD updates in each step of planning. 
This is shown in Algorithm \ref{alg:Csawg-repeated}.

Back to Figure \ref{fig:pairing_prj} middle plot, this shows after the application of the projection indicated by the big green arrow, from the projected weight, there are a number of GD updates (magenta crosses) applied, which indeed bring the update back to the valley, and correct the going-to-the-right direction as learned by the current step-size. Four more planning steps are applied in the plot, after each of which the GD updates bring them back to track. 
At this single iteration, the five repetitions of planning with a total of 55 gradient evaluations \footnote{There were ten GD updates in each planning step used in this experiment. So the total number of gradient evaluations is 5 step-size model projections plus 50 GD updates.} lead the weight great progresses in the valley that would otherwise taking gradient descent thousands of gradient evaluations. 
After this time step, the planning in the middle plot is finished, and the weight in the end is passed to the online GD update of the next time step and the update experience is accumulated again. 

The next time planning is applied as shown in the right plot of the same figure, the step-size is updated with the new update experience and now both step-size components are significantly positive, leading the application of the weight  projection towards the upward and right direction where the minimum is located, giving five projected weights (black squares) each followed by a number of GD updates. 
The bringing-back GD update fine tunes the step-size projection in this case (as indicated by the closeness of crosses and squares), and it brings the update closer to track.

\begin{figure}[t]
\centering
\begin{subfigure}[t]{0.49\textwidth}
\centering
    \includegraphics[width=3.0in]{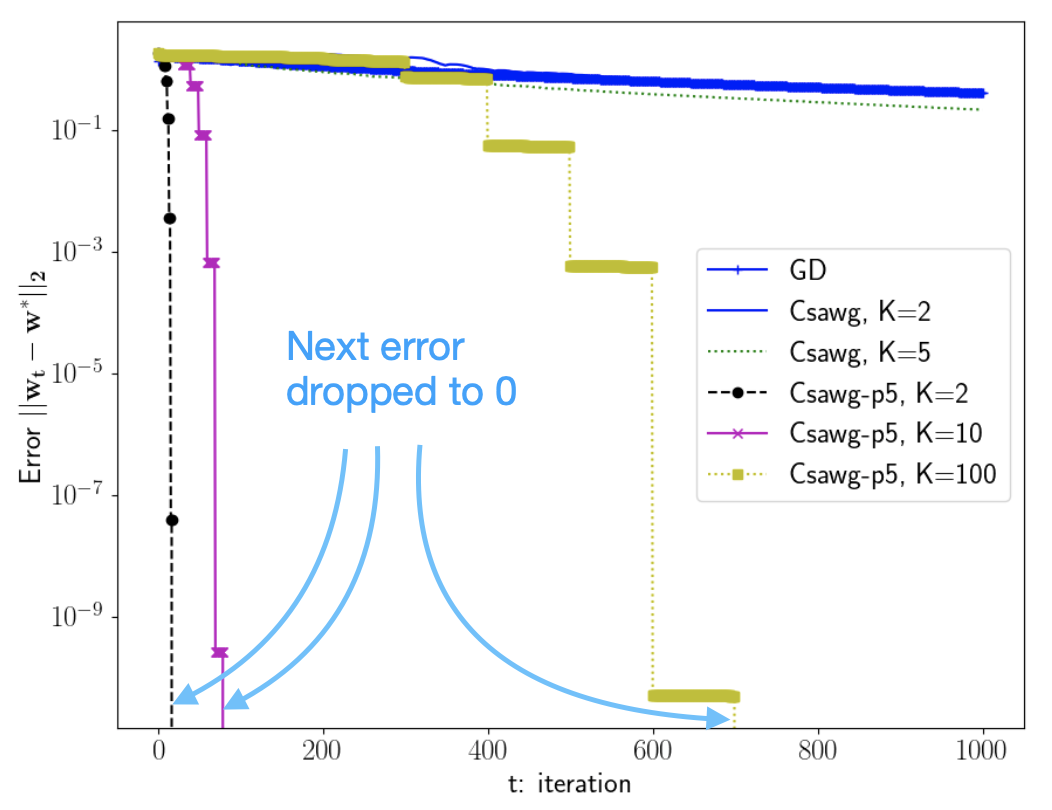}
    \caption{Convergence rates. }
    \label{fig:rb_p5}
\end{subfigure}
\hfill
\begin{subfigure}[t]{0.44\textwidth}
    \centering
    \includegraphics[width=3.2in]{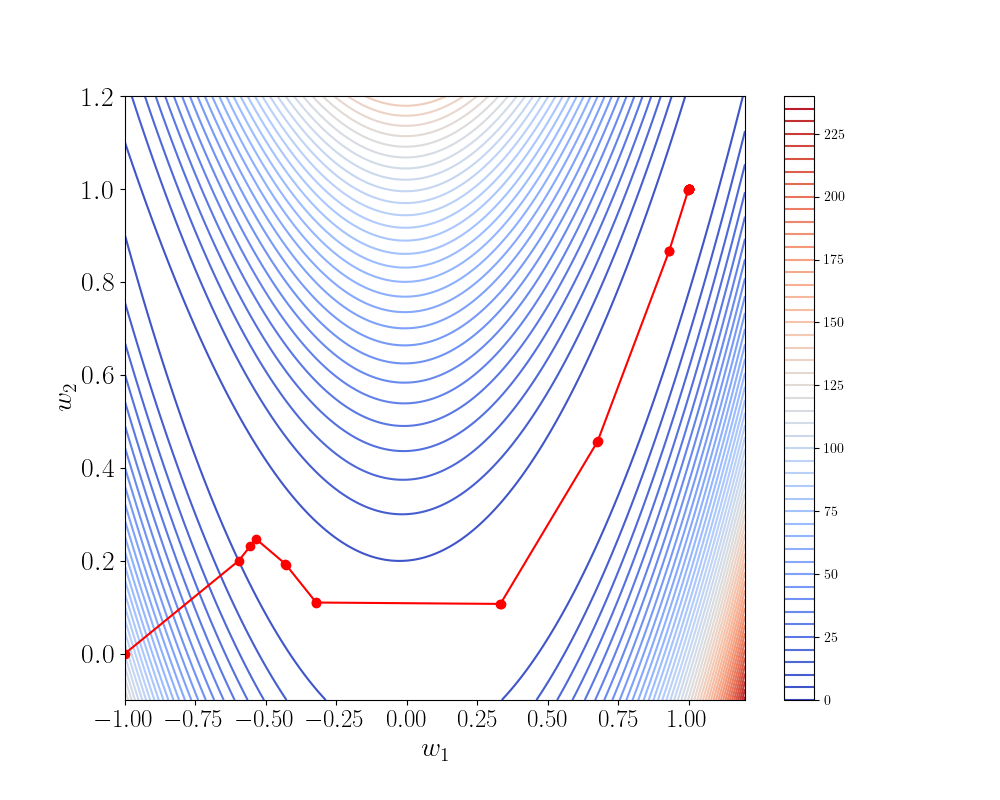}
    \caption{Update trace of Csawg-p5 K2.
    }
    \label{fig:rb_p5_trace}
\end{subfigure}
\caption{Csawg planning five steps for Rosenbrock function. (a) shows this further speeds up Csawg. 
     Csawg-p5 K10 reached zero error  at iteration $79$ with  $465$ total gradient evaluations (including in both online SGD and planning updates); 
    Csawg-p5 K2 reached zero error at iteration $17$ with $458$ total gradient evaluations.
    (b) Update trace of Csawg K2 planning five steps, shown data points are plotted every iteration for the weights in the main loop.
}
\label{fig:rb_traces_p5}
\end{figure}

The introduction of bringing-back GD updates in the planing procedure is fruitful. 
It not only corrects the ``off-track'' behavior but also stabilizes repeated planning and leads to even faster convergence. Figure \ref{fig:rb_p5}
shows five steps of planning for Csawg with different $K$. 
Multiple steps of planning tremendously accelerates Csawg. For example, with five steps of planning Csawg K10 brought the error down to zero at iteration $79$, with only $465$ total gradient evaluations (including in both online GD and planning updates); 
Csawg K2 reached zero error at iteration $17$ with $458$ gradient evaluations. 
The plot is the first $1,000$ iterations, for which Csawg's speed advantage over GD has not shown up yet --- refer to Figure \ref{fig:rosenbrock} to check for the speedup showing up at later iterations. 
The errors have a ``going-down-the-stairs'' pattern, which is especially clear from Csawg K100: the first time planning is applied is at iteration 200 to wait for the two buffers to be ready, which does not reduce the error much; then the second and the following three times when planning is applied, the error goes down like steep stairs. Note the ``flat stairs'' in the plot corresponds to online GD updates, showing slowness in reducing error but they are collected as experience based on which Csawg learns its step-size transition model and plan multiple times to bootstrap effectively. 

The step-sizes learned by Csawg with multiple steps of planning can exceed one, as shown in Figure \ref{fig:rb_alpha_p5}. Interestingly, the peaks of both step-sizes by different $K$ are all close to $2.5$. The step-sizes show an increasing trend in the beginning iterations and then quickly drop to zero. The moment that the step-size dropping to zero is almost the same time of reaching $w^*$.
With multiple steps of planning, even small $K$ is able to make large leaps in the complex loss contour towards the minimum, as shown in Figure \ref{fig:rb_p5_trace} for Csawg K2 --- compare this with the single-step planning in Figure \ref{fig:rb_trace_K10}. 
The two component step-sizes have a similar learning pattern for this problem, because the optimal direction is along the valley which is upward and rightward. In the beginning few planning steps, the second step-size can be negative; this makes sense because in the beginning updates the movement in the second axis changes the direction, which causes a negative step-size saying that ``Don't go in the direction as the current gradient descent tells you, go ascent. The opposite''. This may sound odd as the theory of gradient descent tells us, but the step-size is learned from experience, and the direction it gives is more trustable than simply according to the temporal gradient.

\begin{figure}
\centering
\begin{subfigure}[c]{0.49\textwidth}
\centering
    \includegraphics[width=2.9in]{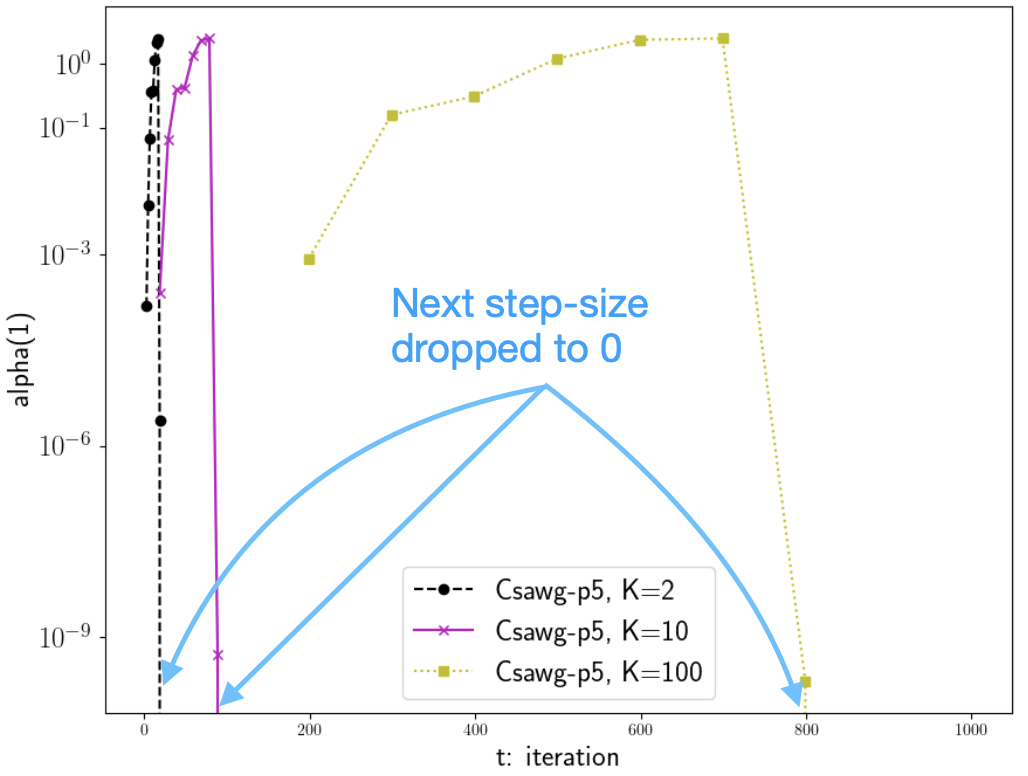}
    \caption{$\alpha(1)$}
    \label{fig:rb_p5_alpha1}
\end{subfigure}
\hfill
\begin{subfigure}[c]{0.49\textwidth}
\centering
    \includegraphics[width=2.9in]{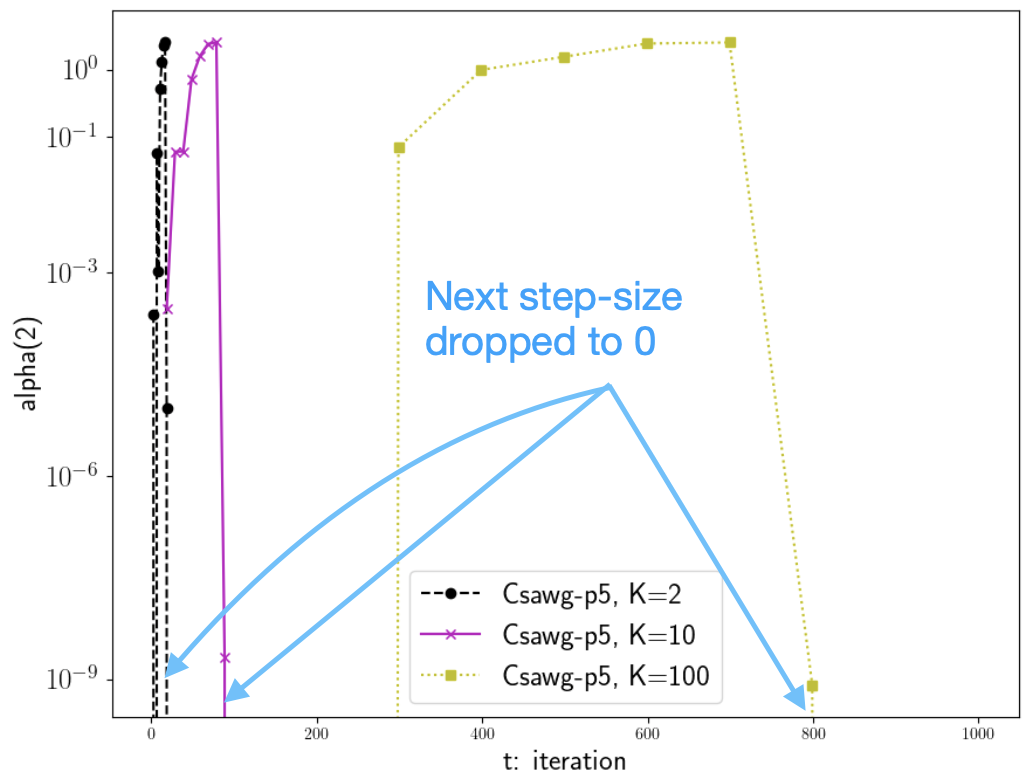}
    \caption{$\alpha(2)$}
    \label{fig:rb_p5_alpha2}
\end{subfigure}
    \caption{Step-sizes learned by Csawg with five planning steps for Rosenbrock function.
    }
    \label{fig:rb_alpha_p5}
\end{figure}

How does ADAM perform for Rosenbrock function?
ADAM has two algorithmic ideas combined: momentum and gradient normalization, according to our review. 
To see which idea(s) helps, we compared the {\em heavy ball} algorithm (equation 9 of \citet{polyak1964ball}):
\begin{equation}\label{eq:momentum_classical}
w_{k+1}  = w_k - \alpha f'(w_k) + p \Delta_k,  
\end{equation}
where $\Delta_k = w_k - w_{k-1}$. 
This is the basic form where many momentum techniques in neural networks derive from, e.g., ADAM's incremental computation of momentum in averaging gradient form \citep{polyak_averaging}, e.g., see our review in Section \ref{sec:adam}.
We also compared with RMSprop, which normalized the gradient with the square root of the averaged gradient squares over time.

\begin{algorithm}[t]
\begin{algorithmic}
\State /* This procedure computes a diagonal step-size to speed up SGD 
\State $\gamma$: a scalar step-size in SGD 
\State $B_1, B_2$: buffers for holding recent weight update data that are $K$ steps away */
\State 
\For{$k=1, \ldots, T$}
    \State $w_k \gets w_{k-1} - \gamma g_k$  \quad \quad /* SGD */
    %\State Put $(w_k, g_t)$ into $B_1$
    \If{$k \le  K$}
        \State Put $(w_k, g_k)$ into $B_1$ 
    \Else 
        \State Put $(w_k, g_k)$ into $B_2$ 
    \EndIf
    
    \If{$len(B_2) == K$} \quad \quad 
    \State /*Note $B_1=\{(w_1, g_1), \ldots, (w_{K}, g_{K})\}; B_2=\{(w_{K+1}, g_{K+1}), \ldots, (w_{2K}, g_{2K})\}$ */
        \State $sum_{1}, sum_{2} = 0$
        \For{$s =1, \ldots, len(B_1)$}
            \State $sum_{1} += g_s \odot (w_s - w_{s+K} )$\quad \quad /*  element-wise product */
            \State $sum_{2} += g_s \odot g_s$ 
        \EndFor
        \For{$i =1, \ldots, len(w)$}
            \State Compute $\alpha(i) = \frac{sum_{1}(i)}{sum_{2}(i)}$
            
        \EndFor
        
    \For{$p=1, \ldots, P$}        \quad \quad /* repeated planning  */
        \For{$i=1, \ldots, len(w)$}
            \State $w(i) \gets w(i) - \alpha(i) g_k$ 
        \EndFor
        \For{$m=1,\ldots, M $}
            \State $w \gets w - \gamma g_k$ % \quad \quad /* SGD corrects ``off-track'' behavior */
        \EndFor
    \EndFor
    
    \State $B1 \gets B_2$
    \State Emptify $B_2$ 
        
    \EndIf

\EndFor

\end{algorithmic}
\caption{Csawg repeated planning.}
\label{alg:Csawg-repeated}
\end{algorithm}

Let's foresee possible outcomes of this experiment before getting started. In the set of benchmark algorithms, RMSprop tells us if the normalized gradient technique can speed up gradient descent or not in this scenario.
The heavy ball method tells us if the momentum technique helps or not. ADAM is probably going to be faster than either of these two techniques if both help; or one of these two techniques work then ADAM should also benefit from it; or neither of the two techniques work, which should match that ADAM would be ineffective in this scenario either. 

We searched extensively for the performance of these algorithms. 
For RMSprop, we searched $(\alpha, \beta)$ over $\alpha\in \{0.0005, 0.001, 0.0015, 0.01\}$ and $\beta\in \{0.8, 0.9, 0.99\}$. 
For the heavy ball algorithm, we searched $(\alpha, p)$ with $\alpha=\{0.0015\}$ and $p\in \{0.8, 0.9\}$.
For ADAM, we searched $(\alpha, \beta_1, \beta_2) $ over $\alpha \in \{0.005, 0.01 \}$, $\beta_1 \in \{0.9, 0.99, 0.999\}$ and $\beta_2\in \{0.99, 0.999, 0.9999 \}$. Note this covers the default beta parameters used by ADAM \citep{kingma2017adam}. 
These candidate search sets were manually selected through repeated trials.

\begin{figure}[t]
\centering
\includegraphics[width=5.8in]{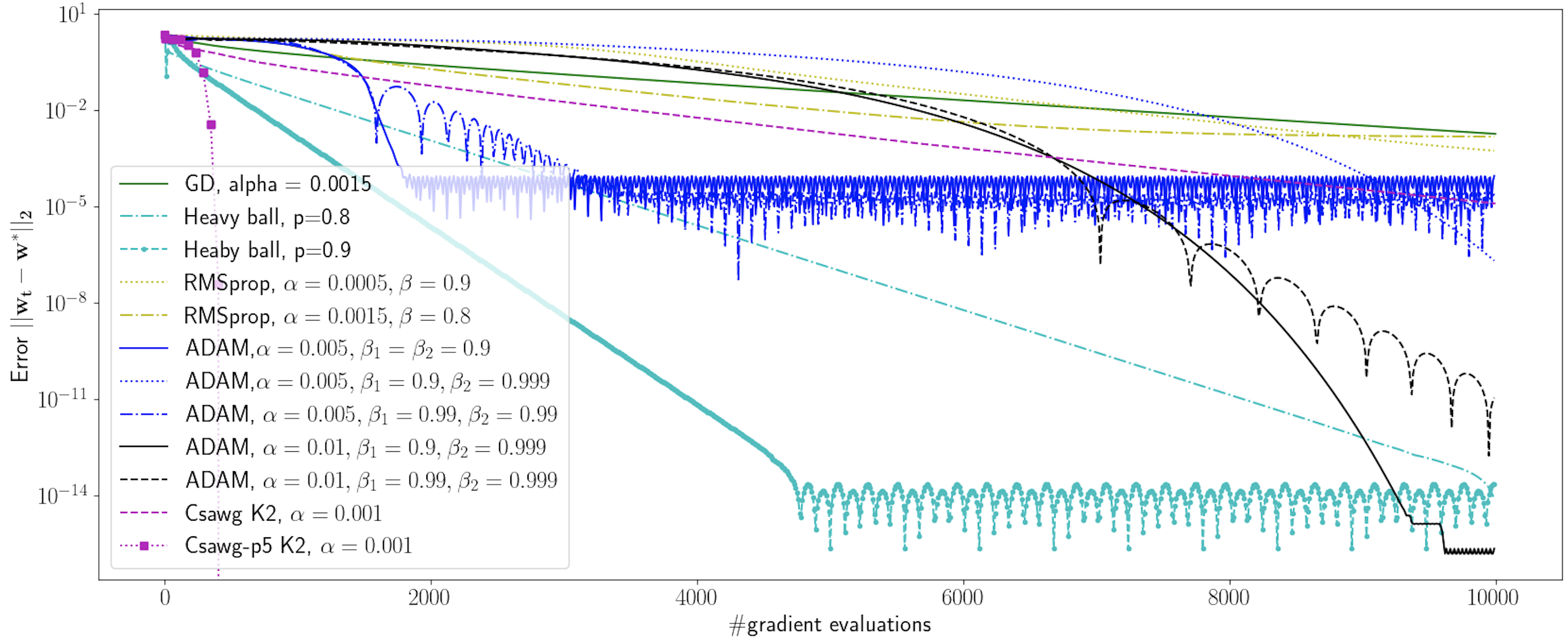}
\caption{Comparing Csawg with momentum and gradient normalization GD algorithms: Rosenbrock function.
}
\label{fig:rb_adam}
\end{figure}

Heavy ball turns out to be very effective for this problem of finding the minimum in the narrow valley; while normalized gradient is not. In Figure \ref{fig:rb_adam}, 
the cyan lines near the bottom show heavy ball algorithms are much faster than GD and Csawg with single-step planning; though a large momentum rate $p$ inevitably leads to oscillation around the minimum in the end. 

The error curves of RMSprop (yellow lines) have an intersection around 9,000 iterations, showing there is a tradeoff between learning speed and quality. 
One RMSprop was faster than GD, but the speedup is insignificant. 

ADAM lines (in blue and black) are in the middle of these two category of algorithms. 
First noticeable is that two blue lines are faster than GD after a bit fewer than 2,000 iterations but they have a similar oscillation pattern to the heavy ball method. This is due to the momentum term effect near the minimum. The two black lines using a larger $\alpha$ than the blue lines had a better solution in the end but they are faster than GD only after about 6,000 iterations.  

Second, the oscillation of ADAM happens at a higher error level in the two blue lines than heavy ball. This is due to the normalization of the gradient. 
ADAM did not have a fast convergence similar to heavy ball because unlike momentum, gradient normalization cannot be easily turned off in ADAM. By controlling $\beta_2$ to be small or large, the normalization of the gradient is always there --- the difference is just putting more or less weight on the recent gradient squares in the normalizer. 
The only way to limit its effect is through a large $\beta_2$, e.g., the default $\beta_2 = 0.999$, which is shown by two black lines and the dotted blue line.
Unfortunately, in the meanwhile this reduces the effect of the latest gradient as well, which causes slow learning in this scenario. 
Thus this example shows that when heavy ball helps speed up convergence while gradient normalization does not in this case of narrow-valley minima, ADAM can be slower than heavy ball. 

Csawg algorithm with multiple steps of planning is even faster than heavy ball. 
Comparing to heavy ball, Csawg algorithm has a faster convergence after a few beginning iterations, this is due to multiple steps of planing using a learned step-size model; the algorithm did not have an oscillation like heavy ball, showing that the algorithm's step-size has a proper way of detecting the closing-to-minimum situation and drives towards zero accordingly at the right time.

\section{Conclusion and Future Work}\label{sec:conclusion}
In this paper, we propose {\em Csawg}, a novel class of methods for accelerating the convergence of gradient descent. The methods use the update experience of gradient descent to learn a diagonal step-size, based on which the methods bootstrap and accelerate. We have five novel conclusions for gradient descent that are discussed in the boxed texts in the paper, which we briefly summarize here:

\begin{enumerate}[(A.)]
    \item The meta-gradient method adapting a single step-size cannot converge faster than a linear rate, and it suffers from ill conditioning of the problem. 
    
    \item Negative step-sizes makes sense for both deterministic and stochastic gradient descent. This is very surprising given the long history of using non-negative step-sizes in many areas of modern computer science. 
    
    \item The diagonal-matrix step-size has the same transformation power as the full-sized matrix (for non-degenerate vectors). This is in a great contrast to linear transformation theory which requires a full-sized matrix to transform vectors to the whole parameter space. 
    
    \item The wide practice of using a scalar step-size for training gradient descent is not good for fast convergence because it is over-stretched. One should use an individual step-size for each parameter. 
    
    \item Csawg brings us a new perspective of the step-size, in that the collection of the individual step-size for each parameter is a multi-step model that can be learned from the update data. This is also the first time introducing the data view of gradient descent update in the literature.
    
\end{enumerate}

We illustrate the performance of the new methods on a convex problem and a non-convex problem. On the convex problem, which is ill-conditioned, the methods converge faster than GD and Nesterov's accelerated gradient, and beyond the Nesterov's theoretical rate. On the famous non-convex Rosenbrock function \citep{10.1093/comjnl/3.3.175}, the methods continue to converge much faster than GD, and the repeated-planning versions of the methods spend fewer than 500 {\em gradient evaluations} for zero error. %\footnote{We noted our machine uses 64-bit floats. So zero error should be interpreted as the error being smaller than $10^{-64}$.}. 
Note in optimization, it is well recognized that GD is slow when problems are ill-conditioned or non-convex, in which case it is much slower than heuristic search methods. 
In particular, GD spends about 10000 iterations to reach $10^{-3}$ accuracy for Rosenbrock function. 
The Nelder-Mead method \citep{Nelder1965ASM}, which is a heuristic search optimization method based on evaluating the function on a few dynamically adjusted vertices in a polytope, achieves $10^{-10}$ error after 185 function evaluations. Adaptive coordinate descent achieves the same error with about 325 function evaluations \citep{loshchilov2011adaptive}. 
Thus if we ignore the difference between gradient evaluations and function evaluations, in this paper we demonstrated that Csawg, a seemingly ``first-order'' GD algorithm, achieves similar rate to these methods. 
Note that gradient evaluations are much more complex than function evaluations in deep neural networks. Thus we expect step-size planning can save more computation by accelerating SGD for deep learning. 

Establishing convergence and convergence rate is an interesting future work. 
The convergence rate of ADAM and Adagrad (similar to RMSProp but with $\beta_2=1$) are $O(log t/t)$ in terms of the expected gradient norm \citep{bach_adam}. The convergence of ADAM comes from establishing that the correlation between the update by ADAM and the true gradient descent is larger than a term that depends on the norm of the scaled true gradient, suggesting in the long run the update goes along the true gradient descent direction, see Lemma 5.1 of \citep{bach_adam}. The bound comes from a growth limit of the expected sum of updates over iterations for each component, see e.g. their Theorem 2 and Lemma 5.2. 
This rate is for loss function whose gradient is smooth and bounded. If we further assume strongly convexity of the loss function, then SGD can converge with a linear convergence rate for the expected error in the weight, which can be achieved by stochastic Polyak step-size \citep{sps_polyak}. Their rate is characterized by $O(1-\frac{\mu}{L})^t$, where $\mu, L$ are the strong-convexity of $F$ and L-smooth factor of $F$ (i.e, Lipschitz constant of the gradient $F'$). The two rates have a big gap though the second one is established with a stronger assumption on the loss function.
Furthermore, for the strongly convex and L-smooth loss functions, the convergence rate of SGD may still be improved. 
In the deterministic setting, Polyak step-size is optimal for convex function whose gradient is non-smooth. However, when the loss function is both strongly convex and L-smooth, Nesterov's accelerated gradient (AGD) has a faster convergence rate, which is $1-\sqrt{\frac{\mu}{L}}$ \citep{nesterov2003introductory}.
It is well known that this convergence rate is tight for first-order methods \citep{nemirovskij1983problem,arjevani2016lower} in the sense of the required number of function evaluations to reach an accuracy. 
There is also a recent method that achieves this optimal rate with line search \citep{bubeck2015geometric} whose proof is much simpler and easier to understand than AGD. 
There is a recent work by \citet{PL_nonuniform} that extends the L-smooth and PL condition with local constants instead of the global $L$ and $\mu$, and they showed that this extension enables a tailored gradient descent algorithm which normalizes the gradient by the local $L$ constant to converge faster than $O(1/k^2)$. 

First-order methods are defined as
producing iterates satisfying
\[
w_k \in w_0 + \mbox{span}\{g_0, \ldots, g_{k-1}\}, 
\]
where $g_k$ is the gradient of the loss at $w_k$, for each $k$. From this perspective, although Csawg  uses only the gradient (which is first-order) and the computation involved is $O(d)$, it is not a first-order method because the gradients are transformed to form the basis of $w_k$:
\[
w_k \in w_0 + \mbox{span}\{g_0, \ldots, g_{k_1-1}, \alpha_{k_1-1}g_{k_1-1}, 
\ldots, g_{k_2-1}, \alpha_{k_2-1}g_{k_2-1}, \dots, 
\}, 
\]
where $k_i$ is a time step that planning is applied. 
For repeated planning, there will be $p$ basis vectors, from powers of the diagonal step-size every time planning is applied, $\{\alpha_{k_i-1}^{s}g_{k_i-1}| s=1, 2, \ldots, p\}$.  
Thus Csawg raises an interesting question: for methods that are not first-order but use the gradient information only, could we be faster than the first-order methods?
Apparently Csawg is developed under a very weak assumption, which is the derivability of the loss function. 
In the studied convex function, we noted the algorithm surpasses the convergence rate of Nesterov's accelerated gradient, which may be orders faster than this theoretical rate limit for the first order methods. Answering this open question may help us understand gradient descent algorithms better, and explore new algorithms that use first-order information beyond the way that the first-order methods cover. 

Our work addresses the question in meta-learning asked by \citet{vilalta2002perspective}: {\em how can we exploit the knowledge of learning to improve the performance of learning algorithms?} Recent meta-gradient methods for meta-learning \citep{santoro2016meta,finn2017model,rusu2018meta,rajeswaran2019meta,yin2019meta,lee2019meta,khodak2019adaptive,javed2019meta,vanschoren2018meta,nichol2018first,finn2019online,hospedales2020meta} and reinforcement learning \citep{meta-gradient-rl,idbd_td,meta_trace,meta-gradient-questions} answered this question well by using gradient descent to learn task-specific parameters, task generalization parameters and algorithmic parameters, and the outcome of their research are systems that require less amount of data but are more competitive in generalization. However, all the experience used by these methods are the samples provided to the learning system, which are either from the real world, computer games or simulations. 
This paper uses another kind of experience, in the form of the gradient descent update, which is important because the knowledge of learning is best represented by the data generated in the learning process. Our results show that the gradient descent update data contains useful knowledge about the future parameters that appear in later iterations, which means step-size planning, a prediction based update method, can greatly accelerate the convergence of gradient descent. In this sense, our work is a validation of \citet{vilalta2002perspective}'s conjecture that learning algorithms can improve their performance hereby in the sense of learning speed through experience. Given the close relationship between step-size adaptation and meta-learning, we expect the update experience of SGD can be useful to improve the performance of meta-learning systems. In particular, it is interesting to explore whether the update experience on one task helps with the learning of another task.

\bibliographystyle{apalike}
\bibliography{ref}

\begin{thebibliography}{}

\bibitem[Aboulnasr and Mayyas, 1997]{aboulnasr1997robust}
Aboulnasr, T. and Mayyas, K. (1997).
\newblock A robust variable step-size lms-type algorithm: analysis and
  simulations.
\newblock {\em IEEE Transactions on signal processing}, 45(3):631--639.

\bibitem[Agarwal et~al., 2018]{GGT_svd_lowrank_fullmatrix}
Agarwal, N., Bullins, B., Chen, X., Hazan, E., Singh, K., Zhang, C., and Zhang,
  Y. (2018).
\newblock The case for full-matrix adaptive regularization.
\newblock {\em CoRR}, abs/1806.02958.

\bibitem[Arjevani et~al., 2016]{arjevani2016lower}
Arjevani, Y., Shalev-Shwartz, S., and Shamir, O. (2016).
\newblock On lower and upper bounds in smooth and strongly convex optimization.
\newblock {\em The Journal of Machine Learning Research}, 17(1):4303--4353.

\bibitem[Armijo, 1966]{armijo1966minimization}
Armijo, L. (1966).
\newblock Minimization of functions having lipschitz continuous first partial
  derivatives.
\newblock {\em Pacific Journal of mathematics}, 16(1):1--3.

\bibitem[Arulkumaran et~al., 2017]{arulkumaran2017brief}
Arulkumaran, K., Deisenroth, M.~P., Brundage, M., and Bharath, A.~A. (2017).
\newblock A brief survey of deep reinforcement learning.
\newblock {\em arXiv preprint arXiv:1708.05866}.

\bibitem[Asi and Duchi, 2019]{duchi_2019stochastic}
Asi, H. and Duchi, J.~C. (2019).
\newblock Stochastic (approximate) proximal point methods: Convergence,
  optimality, and adaptivity.
\newblock {\em SIAM Journal on Optimization}, 29(3):2257--2290.

\bibitem[Baydin et~al., 2017]{baydin2017online}
Baydin, A.~G., Cornish, R., Rubio, D.~M., Schmidt, M., and Wood, F. (2017).
\newblock Online learning rate adaptation with hypergradient descent.
\newblock {\em arXiv preprint arXiv:1703.04782}.

\bibitem[Becker and Le~Cun, 1988]{becker1988improving}
Becker, S. and Le~Cun, Y. (1988).
\newblock Improving the convergence of back-propagation learning with
  second-order methods.

\bibitem[Bellemare et~al., 2017]{dist_rl}
Bellemare, M.~G., Dabney, W., and Munos, R. (2017).
\newblock A distributional perspective on reinforcement learning.

\bibitem[Bellemare et~al., 2012]{atari_marc}
Bellemare, M.~G., Naddaf, Y., Veness, J., and Bowling, M. (2012).
\newblock The arcade learning environment: An evaluation platform for general
  agents.
\newblock {\em CoRR}, abs/1207.4708.

\bibitem[Bengio, 2000]{bengio2000gradient}
Bengio, Y. (2000).
\newblock Gradient-based optimization of hyperparameters.
\newblock {\em Neural computation}, 12(8):1889--1900.

\bibitem[Bengio, 2012]{yoshua_practical}
Bengio, Y. (2012).
\newblock Practical recommendations for gradient-based training of deep
  architectures.

\bibitem[Bertsekas, 2012]{bertsekas2012dynamic}
Bertsekas, D. (2012).
\newblock {\em Dynamic programming and optimal control: Volume I}, volume~1.
\newblock Athena scientific.

\bibitem[Bertsekas and Ioffe, 1996]{lspe96}
Bertsekas, D.~P. and Ioffe, S. (1996).
\newblock Temporal differences-based policy iteration and applications in
  neuro-dynamic programming.
\newblock Technical report, MIT.

\bibitem[Bertsekas and Tsitsiklis, 1996]{bertsekas1996neuro}
Bertsekas, D.~P. and Tsitsiklis, J.~N. (1996).
\newblock {\em Neuro-dynamic programming}.
\newblock Athena Scientific.

\bibitem[Bertsekas and Tsitsiklis, 2000]{bertsekas2000gradient}
Bertsekas, D.~P. and Tsitsiklis, J.~N. (2000).
\newblock Gradient convergence in gradient methods with errors.
\newblock {\em SIAM Journal on Optimization}, 10(3):627--642.

\bibitem[Bishop et~al., 1995]{pr_bishop}
Bishop, C.~M. et~al. (1995).
\newblock {\em Neural networks for pattern recognition}.
\newblock Oxford university press.

\bibitem[Borkar and Meyn, 2000]{borkar2000ode}
Borkar, V.~S. and Meyn, S.~P. (2000).
\newblock The ode method for convergence of stochastic approximation and
  reinforcement learning.
\newblock {\em SIAM Journal on Control and Optimization}, 38(2):447--469.

\bibitem[Boyd and Vandenberghe, 2004]{boyd_book}
Boyd, S. and Vandenberghe, L. (2004).
\newblock {\em Convex Optimization}.
\newblock Cambridge University Press, USA.

\bibitem[Brown et~al., 2020]{gpt3}
Brown, T.~B., Mann, B., Ryder, N., Subbiah, M., Kaplan, J., Dhariwal, P.,
  Neelakantan, A., Shyam, P., Sastry, G., Askell, A., Agarwal, S.,
  Herbert-Voss, A., Krueger, G., Henighan, T., Child, R., Ramesh, A., Ziegler,
  D.~M., Wu, J., Winter, C., Hesse, C., Chen, M., Sigler, E., Litwin, M., Gray,
  S., Chess, B., Clark, J., Berner, C., McCandlish, S., Radford, A., Sutskever,
  I., and Amodei, D. (2020).
\newblock Language models are few-shot learners.

\bibitem[Bubeck, 2013]{course_gd_rate_bubeck}
Bubeck, S. (2013).
\newblock Orf523: Nesterov’s accelerated gradient descent.
\newblock
  \url{https://blogs.princeton.edu/imabandit/2013/04/01/acceleratedgradientdescent/}.
\newblock Princeton University.

\bibitem[Bubeck, 2014]{bubeck_book}
Bubeck, S. (2014).
\newblock Convex optimization: Algorithms and complexity.

\bibitem[Bubeck et~al., 2015]{bubeck2015geometric}
Bubeck, S., Lee, Y.~T., and Singh, M. (2015).
\newblock A geometric alternative to nesterov's accelerated gradient descent.
\newblock {\em arXiv preprint arXiv:1506.08187}.

\bibitem[Carmon and Hinder, 2022]{parameter_free_2022making}
Carmon, Y. and Hinder, O. (2022).
\newblock Making sgd parameter-free.
\newblock {\em arXiv preprint arXiv:2205.02160}.

\bibitem[Chan and Stolfo, 1993]{chan1993experiments}
Chan, P.~K. and Stolfo, S.~J. (1993).
\newblock Experiments on multistrategy learning by meta-learning.
\newblock In {\em Proceedings of the second international conference on
  information and knowledge management}, pages 314--323.

\bibitem[Chen, 2006]{hanfu_2006stochastic}
Chen, H.-F. (2006).
\newblock {\em Stochastic approximation and its applications}, volume~64.
\newblock Springer Science \& Business Media.

\bibitem[Chen et~al., 2020]{chen2020better}
Chen, K., Langford, J., and Orabona, F. (2020).
\newblock Better parameter-free stochastic optimization with ode updates for
  coin-betting.
\newblock {\em arXiv preprint arXiv:2006.07507}.

\bibitem[Curry, 1944]{curry1944method}
Curry, H.~B. (1944).
\newblock The method of steepest descent for non-linear minimization problems.
\newblock {\em Quarterly of Applied Mathematics}, 2(3):258--261.

\bibitem[Dinh et~al., 2017]{sharp_minima2}
Dinh, L., Pascanu, R., Bengio, S., and Bengio, Y. (2017).
\newblock Sharp minima can generalize for deep nets.
\newblock {\em CoRR}, abs/1703.04933.

\bibitem[Dollar et~al., 2011]{dollar2011pedestrian}
Dollar, P., Wojek, C., Schiele, B., and Perona, P. (2011).
\newblock Pedestrian detection: An evaluation of the state of the art.
\newblock {\em IEEE transactions on pattern analysis and machine intelligence},
  34(4):743--761.

\bibitem[Duchi et~al., 2011]{adagrad}
Duchi, J., Hazan, E., and Singer, Y. (2011).
\newblock Adaptive subgradient methods for online learning and stochastic
  optimization.
\newblock {\em J. Mach. Learn. Res.}, 12(null):2121–2159.

\bibitem[Défossez et~al., 2020]{bach_adam}
Défossez, A., Bottou, L., Bach, F., and Usunier, N. (2020).
\newblock A simple convergence proof of adam and adagrad.

\bibitem[Finn et~al., 2017]{finn2017model}
Finn, C., Abbeel, P., and Levine, S. (2017).
\newblock Model-agnostic meta-learning for fast adaptation of deep networks.
\newblock In {\em International conference on machine learning}, pages
  1126--1135. PMLR.

\bibitem[Finn et~al., 2019]{finn2019online}
Finn, C., Rajeswaran, A., Kakade, S., and Levine, S. (2019).
\newblock Online meta-learning.
\newblock In {\em International Conference on Machine Learning}, pages
  1920--1930. PMLR.

\bibitem[Fran{\c{c}}ois-Lavet et~al., 2018]{franccois2018introduction}
Fran{\c{c}}ois-Lavet, V., Henderson, P., Islam, R., Bellemare, M.~G., and
  Pineau, J. (2018).
\newblock An introduction to deep reinforcement learning.
\newblock {\em arXiv preprint arXiv:1811.12560}.

\bibitem[Gong et~al., 2013]{changshui_linesearch}
Gong, P., Zhang, C., Lu, Z., Huang, J., and Ye, J. (2013).
\newblock A general iterative shrinkage and thresholding algorithm for
  non-convex regularized optimization problems.
\newblock In {\em international conference on machine learning}, pages 37--45.
  PMLR.

\bibitem[Goodfellow et~al., 2016]{goodfellow2016deep}
Goodfellow, I., Bengio, Y., and Courville, A. (2016).
\newblock {\em Deep learning}.
\newblock MIT press.

\bibitem[Grunewalder et~al., 2012]{rkhs-mdps}
Grunewalder, S., Lever, G., Baldassarre, L., Pontil, M., and Gretton, A.
  (2012).
\newblock Modelling transition dynamics in mdps with rkhs embeddings.

\bibitem[Harold et~al., 1997]{yin1997stochastic}
Harold, J., Kushner, G., and George, Y. (1997).
\newblock Stochastic approximation algorithms and applications.

\bibitem[Hastie et~al., 2009]{hastie2009elements}
Hastie, T., Tibshirani, R., Friedman, J.~H., and Friedman, J.~H. (2009).
\newblock {\em The elements of statistical learning: data mining, inference,
  and prediction}, volume~2.
\newblock Springer.

\bibitem[Haykin, 1994]{simon_book}
Haykin, S. (1994).
\newblock {\em Neural Networks: A Comprehensive Foundation}.
\newblock Prentice Hall PTR, USA, 1st edition.

\bibitem[He et~al., 2019]{minima_valley}
He, H., Huang, G., and Yuan, Y. (2019).
\newblock Asymmetric valleys: Beyond sharp and flat local minima.
\newblock {\em CoRR}, abs/1902.00744.

\bibitem[Hinton, 2021]{rmsprop}
Hinton, G. (Retrieved September 2021).
\newblock Lecture 6e rmsprop: Divide the gradient by a running average of its
  recent magnitude (pdf). p. 26.

\bibitem[Horn, 1985]{horn1985cr}
Horn, R.~A. (1985).
\newblock Cr johnson matrix analysis.
\newblock {\em Cambridge UP, New York}.

\bibitem[Hospedales et~al., 2020]{hospedales2020meta}
Hospedales, T., Antoniou, A., Micaelli, P., and Storkey, A. (2020).
\newblock Meta-learning in neural networks: A survey.
\newblock {\em arXiv preprint arXiv:2004.05439}.

\bibitem[Householder, 1964]{householder_theory}
Householder, A.~S. (1964).
\newblock The theory of matrices in numerical analysis, blaisdell, new york.
\newblock {\em MR}, 30:5475.

\bibitem[Hutter and Legg, 2007]{hutter2007temporal}
Hutter, M. and Legg, S. (2007).
\newblock Temporal difference updating without a learning rate.
\newblock {\em Advances in neural information processing systems}, 20.

\bibitem[Jacobs, 1988]{jacobs1988increased}
Jacobs, R.~A. (1988).
\newblock Increased rates of convergence through learning rate adaptation.
\newblock {\em Neural networks}, 1(4):295--307.

\bibitem[Javed and White, 2019]{javed2019meta}
Javed, K. and White, M. (2019).
\newblock Meta-learning representations for continual learning.
\newblock {\em Advances in Neural Information Processing Systems}, 32.

\bibitem[Johnson and Zhang, 2013]{svrg}
Johnson, R. and Zhang, T. (2013).
\newblock Accelerating stochastic gradient descent using predictive variance
  reduction.
\newblock In {\em Proceedings of the 26th International Conference on Neural
  Information Processing Systems - Volume 1}, NIPS'13, page 315–323, Red
  Hook, NY, USA. Curran Associates Inc.

\bibitem[Kearney et~al., 2018]{idbd_td}
Kearney, A., Veeriah, V., Travnik, J.~B., Sutton, R.~S., and Pilarski, P.~M.
  (2018).
\newblock Tidbd: Adapting temporal-difference step-sizes through stochastic
  meta-descent.

\bibitem[Keskar et~al., 2016]{sharp_minima}
Keskar, N.~S., Mudigere, D., Nocedal, J., Smelyanskiy, M., and Tang, P. T.~P.
  (2016).
\newblock On large-batch training for deep learning: Generalization gap and
  sharp minima.
\newblock {\em CoRR}, abs/1609.04836.

\bibitem[Khodak et~al., 2019]{khodak2019adaptive}
Khodak, M., Balcan, M.-F.~F., and Talwalkar, A.~S. (2019).
\newblock Adaptive gradient-based meta-learning methods.
\newblock {\em Advances in Neural Information Processing Systems}, 32.

\bibitem[Kingma and Ba, 2017]{kingma2017adam}
Kingma, D.~P. and Ba, J. (2017).
\newblock Adam: A method for stochastic optimization.

\bibitem[Kolda et~al., 2003]{kolda2003optimization}
Kolda, T.~G., Lewis, R.~M., and Torczon, V. (2003).
\newblock Optimization by direct search: New perspectives on some classical and
  modern methods.
\newblock {\em SIAM review}, 45(3):385--482.

\bibitem[Konda and Tsitsiklis, 2004]{Konda_2004}
Konda, V.~R. and Tsitsiklis, J.~N. (2004).
\newblock Convergence rate of linear two-time-scale stochastic approximation.
\newblock {\em The Annals of Applied Probability}, 14(2).

\bibitem[LeCun et~al., 2015]{lecun2015deep}
LeCun, Y., Bengio, Y., and Hinton, G. (2015).
\newblock Deep learning.
\newblock {\em nature}, 521(7553):436--444.

\bibitem[Lee et~al., 2015]{lee2015variable}
Lee, H.-S., Kim, S.-E., Lee, J.-W., and Song, W.-J. (2015).
\newblock A variable step-size diffusion lms algorithm for distributed
  estimation.
\newblock {\em IEEE Transactions on Signal Processing}, 63(7):1808--1820.

\bibitem[Lee et~al., 2019]{lee2019meta}
Lee, K., Maji, S., Ravichandran, A., and Soatto, S. (2019).
\newblock Meta-learning with differentiable convex optimization.
\newblock In {\em Proceedings of the IEEE/CVF Conference on Computer Vision and
  Pattern Recognition}, pages 10657--10665.

\bibitem[Levinson et~al., 2011]{levinson2011towards}
Levinson, J., Askeland, J., Becker, J., Dolson, J., Held, D., Kammel, S.,
  Kolter, J.~Z., Langer, D., Pink, O., Pratt, V., et~al. (2011).
\newblock Towards fully autonomous driving: Systems and algorithms.
\newblock In {\em 2011 IEEE intelligent vehicles symposium (IV)}, pages
  163--168. IEEE.

\bibitem[Li and Orabona, 2019]{li2019convergence}
Li, X. and Orabona, F. (2019).
\newblock On the convergence of stochastic gradient descent with adaptive
  stepsizes.
\newblock In {\em The 22nd International Conference on Artificial Intelligence
  and Statistics}, pages 983--992. PMLR.

\bibitem[Li, 2017]{li2017deep}
Li, Y. (2017).
\newblock Deep reinforcement learning: An overview.
\newblock {\em arXiv preprint arXiv:1701.07274}.

\bibitem[Ljung, 1998]{ljung1998system}
Ljung, L. (1998).
\newblock System identification.
\newblock In {\em Signal analysis and prediction}, pages 163--173. Springer.

\bibitem[Loizou et~al., 2021]{sps_polyak}
Loizou, N., Vaswani, S., Laradji, I., and Lacoste-Julien, S. (2021).
\newblock Stochastic polyak step-size for sgd: An adaptive learning rate for
  fast convergence.

\bibitem[Loshchilov and Hutter, 2017]{adamw}
Loshchilov, I. and Hutter, F. (2017).
\newblock Decoupled weight decay regularization.

\bibitem[Loshchilov et~al., 2011]{loshchilov2011adaptive}
Loshchilov, I., Schoenauer, M., and Sebag, M. (2011).
\newblock Adaptive coordinate descent.
\newblock In {\em Proceedings of the 13th annual conference on Genetic and
  evolutionary computation}, pages 885--892.

\bibitem[Luenberger et~al., 1984]{luenberger1984linear}
Luenberger, D.~G., Ye, Y., et~al. (1984).
\newblock {\em Linear and nonlinear programming}, volume~2.
\newblock Springer.

\bibitem[Maclaurin et~al., 2015]{hyperparam_gd}
Maclaurin, D., Duvenaud, D., and Adams, R.~P. (2015).
\newblock Gradient-based hyperparameter optimization through reversible
  learning.

\bibitem[Mahmood et~al., 2012]{autostep}
Mahmood, A.~R., Sutton, R.~S., Degris, T., and Pilarski, P.~M. (2012).
\newblock Tuning-free step-size adaptation.
\newblock In {\em 2012 IEEE International Conference on Acoustics, Speech and
  Signal Processing (ICASSP)}, pages 2121--2124. IEEE.

\bibitem[Mandic, 2004]{NLMS}
Mandic, D.~P. (2004).
\newblock A generalized normalized gradient descent algorithm.
\newblock {\em IEEE signal processing letters}, 11(2):115--118.

\bibitem[Martens and Grosse, 2020]{Kfac}
Martens, J. and Grosse, R. (2020).
\newblock Optimizing neural networks with kronecker-factored approximate
  curvature.

\bibitem[Mathews and Xie, 1993]{mathews1993stochastic}
Mathews, V.~J. and Xie, Z. (1993).
\newblock A stochastic gradient adaptive filter with gradient adaptive step
  size.
\newblock {\em IEEE transactions on Signal Processing}, 41(6):2075--2087.

\bibitem[Mathieu and LeCun, 2014]{mathieu2014fast}
Mathieu, M. and LeCun, Y. (2014).
\newblock Fast approximation of rotations and hessians matrices.

\bibitem[Mei et~al., 2021]{PL_nonuniform}
Mei, J., Gao, Y., Dai, B., Szepesvari, C., and Schuurmans, D. (2021).
\newblock Leveraging non-uniformity in first-order non-convex optimization.

\bibitem[Mnih et~al., 2015]{mnih2015human}
Mnih, V., Kavukcuoglu, K., Silver, D., Rusu, A.~A., Veness, J., Bellemare,
  M.~G., Graves, A., Riedmiller, M., Fidjeland, A.~K., Ostrovski, G., et~al.
  (2015).
\newblock Human-level control through deep reinforcement learning.
\newblock {\em nature}, 518(7540):529--533.

\bibitem[Moore and Atkeson, 1993]{moore1993prioritized}
Moore, A.~W. and Atkeson, C.~G. (1993).
\newblock Prioritized sweeping: Reinforcement learning with less data and less
  time.
\newblock {\em Machine learning}, 13(1):103--130.

\bibitem[Nedic and Bertsekas, 2003]{lspe03}
Nedic, A. and Bertsekas, D.~P. (2003).
\newblock Least squares policy evaluation algorithms with linear function
  approximation.
\newblock {\em Discret. Event Dyn. Syst.}, 13(1-2):79--110.

\bibitem[Nelder and Mead, 1965]{Nelder1965ASM}
Nelder, J.~A. and Mead, R. (1965).
\newblock A simplex method for function minimization.
\newblock {\em Comput. J.}, 7:308--313.

\bibitem[Nemirovskij and Yudin, 1983]{nemirovskij1983problem}
Nemirovskij, A.~S. and Yudin, D.~B. (1983).
\newblock Problem complexity and method efficiency in optimization.

\bibitem[Nesterov, 2003]{nesterov2003introductory}
Nesterov, Y. (2003).
\newblock {\em Introductory lectures on convex optimization: A basic course},
  volume~87.
\newblock Springer Science \& Business Media.

\bibitem[Nesterov, 1983]{nesterov1983method}
Nesterov, Y.~E. (1983).
\newblock A method for solving the convex programming problem with convergence
  rate o (1/k\^{} 2).
\newblock In {\em Dokl. akad. nauk Sssr}, volume 269, pages 543--547.

\bibitem[Nichol et~al., 2018]{nichol2018first}
Nichol, A., Achiam, J., and Schulman, J. (2018).
\newblock On first-order meta-learning algorithms.
\newblock {\em arXiv preprint arXiv:1803.02999}.

\bibitem[Ostermeier et~al., 1994]{ostermeier1994step}
Ostermeier, A., Gawelczyk, A., and Hansen, N. (1994).
\newblock Step-size adaptation based on non-local use of selection information.
\newblock In {\em International Conference on Parallel Problem Solving from
  Nature}, pages 189--198. Springer.

\bibitem[Pascanu and Bengio, 2014]{pascanu2014revisiting}
Pascanu, R. and Bengio, Y. (2014).
\newblock Revisiting natural gradient for deep networks.

\bibitem[Pearlmutter, 1994]{Pearlmutter}
Pearlmutter, B.~A. (1994).
\newblock Fast exact multiplication by the hessian.
\newblock {\em Neural Comput.}, 6(1):147–160.

\bibitem[Polyak, 1964]{polyak1964ball}
Polyak, B.~T. (1964).
\newblock Some methods of speeding up the convergence of iteration methods.
\newblock {\em Ussr computational mathematics and mathematical physics},
  4(5):1--17.

\bibitem[Polyak, 1987]{polyak_book}
Polyak, B.~T. (1987).
\newblock {\em Introduction to optimization}.
\newblock translations series in mathematics and engineering.

\bibitem[Polyak and Juditsky, 1992]{polyak_averaging}
Polyak, B.~T. and Juditsky, A.~B. (1992).
\newblock Acceleration of stochastic approximation by averaging.
\newblock {\em SIAM journal on control and optimization}, 30(4):838--855.

\bibitem[Qian, 1999]{10.1016/S0893-6080(98)00116-6}
Qian, N. (1999).
\newblock On the momentum term in gradient descent learning algorithms.
\newblock {\em Neural Netw.}, 12(1):145–151.

\bibitem[Rajeswaran et~al., 2019]{rajeswaran2019meta}
Rajeswaran, A., Finn, C., Kakade, S.~M., and Levine, S. (2019).
\newblock Meta-learning with implicit gradients.
\newblock {\em Advances in neural information processing systems}, 32.

\bibitem[Rastrigin, 1963]{Rastrigin1963}
Rastrigin, L.~A. (1963).
\newblock The convergence of the random search method in the extremal control
  of many-parameter system.
\newblock {\em Automation and Remote Control}, 24(10):1337--1342.

\bibitem[Reddi et~al., 2018]{amsgrad}
Reddi, S.~J., Kale, S., and Kumar, S. (2018).
\newblock On the convergence of adam and beyond.
\newblock In {\em International Conference on Learning Representations}.

\bibitem[Robbins and Monro, 1951]{robbins1951stochastic}
Robbins, H. and Monro, S. (1951).
\newblock A stochastic approximation method.
\newblock {\em The annals of mathematical statistics}, pages 400--407.

\bibitem[Rolinek and Martius, 2018]{rolinek2018l4}
Rolinek, M. and Martius, G. (2018).
\newblock L4: Practical loss-based stepsize adaptation for deep learning.
\newblock {\em arXiv preprint arXiv:1802.05074}.

\bibitem[Rosenbrock, 1960]{10.1093/comjnl/3.3.175}
Rosenbrock, H.~H. (1960).
\newblock {An Automatic Method for Finding the Greatest or Least Value of a
  Function}.
\newblock {\em The Computer Journal}, 3(3):175--184.

\bibitem[Rumelhart et~al., 1986]{Rumelhart:1986we}
Rumelhart, D.~E., Hinton, G.~E., and Williams, R.~J. (1986).
\newblock {Learning Representations by Back-propagating Errors}.
\newblock {\em Nature}, 323(6088):533--536.

\bibitem[Rusu et~al., 2018]{rusu2018meta}
Rusu, A.~A., Rao, D., Sygnowski, J., Vinyals, O., Pascanu, R., Osindero, S.,
  and Hadsell, R. (2018).
\newblock Meta-learning with latent embedding optimization.
\newblock {\em arXiv preprint arXiv:1807.05960}.

\bibitem[Saad, 2003]{saad03:IMS}
Saad, Y. (2003).
\newblock {\em {Iterative Methods for Sparse Linear Systems}}.
\newblock Other Titles in Applied Mathematics. SIAM, second edition.

\bibitem[Sagun et~al., 2017]{lecun_hessian}
Sagun, L., Bottou, L., and LeCun, Y. (2017).
\newblock Eigenvalues of the hessian in deep learning: Singularity and beyond.

\bibitem[Santoro et~al., 2016]{santoro2016meta}
Santoro, A., Bartunov, S., Botvinick, M., Wierstra, D., and Lillicrap, T.
  (2016).
\newblock Meta-learning with memory-augmented neural networks.
\newblock In {\em International conference on machine learning}, pages
  1842--1850. PMLR.

\bibitem[Schaul et~al., 2013]{schaul2013pesky}
Schaul, T., Zhang, S., and LeCun, Y. (2013).
\newblock No more pesky learning rates.

\bibitem[Schmidt, 2017]{course_gd_rate}
Schmidt, M. (2017).
\newblock Cpsc 540: Machine learning, rates of convergence.
\newblock \url{https://www.cs.ubc.ca/~schmidtm/Courses/540-W18/L5.pdf}.
\newblock University of British Columbia.

\bibitem[Schraudolph, 1999]{smd_schraudolph1999local}
Schraudolph, N.~N. (1999).
\newblock Local gain adaptation in stochastic gradient descent.

\bibitem[Schumer and Steiglitz, 1968]{schumer1968adaptive}
Schumer, M. and Steiglitz, K. (1968).
\newblock Adaptive step size random search.
\newblock {\em IEEE Transactions on Automatic Control}, 13(3):270--276.

\bibitem[Shortreed et~al., 2011]{shortreed2011informing}
Shortreed, S.~M., Laber, E., Lizotte, D.~J., Stroup, T.~S., Pineau, J., and
  Murphy, S.~A. (2011).
\newblock Informing sequential clinical decision-making through reinforcement
  learning: an empirical study.
\newblock {\em Machine learning}, 84(1-2):109--136.

\bibitem[Silver et~al., 2016]{silver2016mastering}
Silver, D., Huang, A., Maddison, C.~J., Guez, A., Sifre, L., Van Den~Driessche,
  G., Schrittwieser, J., Antonoglou, I., Panneershelvam, V., Lanctot, M.,
  et~al. (2016).
\newblock Mastering the game of go with deep neural networks and tree search.
\newblock {\em nature}, 529(7587):484--489.

\bibitem[Sutton, 1988]{td}
Sutton, R.~S. (1988).
\newblock Learning to predict by the methods of temporal differences.
\newblock {\em Machine Learning}, 3(1):9--44.

\bibitem[Sutton, 1991]{dyna}
Sutton, R.~S. (1991).
\newblock Dyna, an integrated architecture for learning, planning, and
  reacting.
\newblock {\em SIGART Bull.}, 2(4):160–163.

\bibitem[Sutton, 1992]{sutton1992adapting}
Sutton, R.~S. (1992).
\newblock Adapting bias by gradient descent: An incremental version of
  delta-bar-delta.
\newblock In {\em AAAI}, pages 171--176. San Jose, CA.

\bibitem[Sutton, 2022]{meta-gradient-sutton}
Sutton, R.~S. (2022).
\newblock A history of meta-gradient: Gradient methods for meta-learning.

\bibitem[Sutton and Barto, 2018]{sutton2018reinforcement}
Sutton, R.~S. and Barto, A.~G. (2018).
\newblock {\em Reinforcement learning: An introduction}.
\newblock MIT press, 2nd edition.

\bibitem[Sutton et~al., 1999]{sutton1999between}
Sutton, R.~S., Precup, D., and Singh, S. (1999).
\newblock Between mdps and semi-mdps: A framework for temporal abstraction in
  reinforcement learning.
\newblock {\em Artificial intelligence}, 112(1-2):181--211.

\bibitem[Sutton et~al., 2008]{lin_dyna}
Sutton, R.~S., Szepesv\'{a}ri, C., Geramifard, A., and Bowling, M. (2008).
\newblock Dyna-style planning with linear function approximation and
  prioritized sweeping.
\newblock UAI'08, page 528–536, Arlington, Virginia, USA. AUAI Press.

\bibitem[Szepesv{\'a}ri, 2010]{szepesvari2010algorithms}
Szepesv{\'a}ri, C. (2010).
\newblock Algorithms for reinforcement learning.
\newblock {\em Synthesis lectures on artificial intelligence and machine
  learning}, 4(1):1--103.

\bibitem[Tesauro, 1995]{tdgammon}
Tesauro, G. (1995).
\newblock Temporal difference learning and td-gammon.
\newblock {\em J. Int. Comput. Games Assoc.}, 18:88.

\bibitem[Thrun and Pratt, 1998]{thrun1998learning}
Thrun, S. and Pratt, L. (1998).
\newblock Learning to learn: Introduction and overview.
\newblock In {\em Learning to learn}, pages 3--17. Springer.

\bibitem[Tsitsiklis and Van~Roy, 1997]{tsi_td}
Tsitsiklis, J. and Van~Roy, B. (1997).
\newblock An analysis of temporal-difference learning with function
  approximation.
\newblock {\em IEEE Transactions on Automatic Control}, 42(5):674--690.

\bibitem[Vanschoren, 2018]{vanschoren2018meta}
Vanschoren, J. (2018).
\newblock Meta-learning: A survey.
\newblock {\em arXiv preprint arXiv:1810.03548}.

\bibitem[Vaswani et~al., 2019]{vaswani2019painless}
Vaswani, S., Mishkin, A., Laradji, I., Schmidt, M., Gidel, G., and
  Lacoste-Julien, S. (2019).
\newblock Painless stochastic gradient: Interpolation, line-search, and
  convergence rates.
\newblock {\em Advances in neural information processing systems}, 32.

\bibitem[Veeriah et~al., 2019]{meta-gradient-questions}
Veeriah, V., Hessel, M., Xu, Z., Lewis, R., Rajendran, J., Oh, J., van Hasselt,
  H., Silver, D., and Singh, S. (2019).
\newblock Discovery of useful questions as auxiliary tasks.

\bibitem[Vilalta and Drissi, 2002]{vilalta2002perspective}
Vilalta, R. and Drissi, Y. (2002).
\newblock A perspective view and survey of meta-learning.
\newblock {\em Artificial intelligence review}, 18(2):77--95.

\bibitem[Vinyals and Povey, 2011]{vinyals2011krylov}
Vinyals, O. and Povey, D. (2011).
\newblock Krylov subspace descent for deep learning.

\bibitem[W{\'o}jcik et~al., 2019]{wojcik2019lossgrad}
W{\'o}jcik, B., Maziarka, {\L}., and Tabor, J. (2019).
\newblock Lossgrad: automatic learning rate in gradient descent.
\newblock {\em arXiv preprint arXiv:1902.07656}.

\bibitem[Xu et~al., 2018]{meta-gradient-rl}
Xu, Z., van Hasselt, H., and Silver, D. (2018).
\newblock Meta-gradient reinforcement learning.

\bibitem[Yao et~al., 2009a]{multi-step-dyna}
Yao, H., Bhatnagar, S., and Diao, D. (2009a).
\newblock Multi-step linear dyna-style planning.
\newblock In {\em Proceedings of the 22nd International Conference on Neural
  Information Processing Systems}, NIPS'09, page 2187–2195, Red Hook, NY,
  USA. Curran Associates Inc.

\bibitem[Yao et~al., 2009b]{LMS2}
Yao, H., Bhatnagar, S., and Szepesvári, C. (2009b).
\newblock Lms-2: Towards an algorithm that is as cheap as lms and almost as
  efficient as rls.
\newblock In {\em Proceedings of the 48h IEEE Conference on Decision and
  Control (CDC) held jointly with 2009 28th Chinese Control Conference}, pages
  1181--1188.

\bibitem[Yao and Liu, 2008]{ptd_yao}
Yao, H. and Liu, Z.-Q. (2008).
\newblock Preconditioned temporal difference learning.
\newblock In {\em Proceedings of the 25th International Conference on Machine
  Learning}, ICML '08, page 1208–1215, New York, NY, USA. Association for
  Computing Machinery.

\bibitem[Yao et~al., 2014]{yao2014pseudo}
Yao, H., Szepesv{\'a}ri, C., Pires, B.~A., and Zhang, X. (2014).
\newblock Pseudo-mdps and factored linear action models.
\newblock In {\em 2014 IEEE Symposium on Adaptive Dynamic Programming and
  Reinforcement Learning (ADPRL)}, pages 1--9. IEEE.

\bibitem[Yao et~al., 2020]{yao2020pyhessian}
Yao, Z., Gholami, A., Keutzer, K., and Mahoney, M.~W. (2020).
\newblock Pyhessian: Neural networks through the lens of the hessian.
\newblock In {\em 2020 IEEE International Conference on Big Data (Big Data)},
  pages 581--590. IEEE.

\bibitem[Yin et~al., 2019]{yin2019meta}
Yin, M., Tucker, G., Zhou, M., Levine, S., and Finn, C. (2019).
\newblock Meta-learning without memorization.
\newblock {\em arXiv preprint arXiv:1912.03820}.

\bibitem[Young et~al., 2018]{meta_trace}
Young, K., Wang, B., and Taylor, M.~E. (2018).
\newblock Metatrace actor-critic: Online step-size tuning by meta-gradient
  descent for reinforcement learning control.

\bibitem[Zeiler, 2012]{adadelta}
Zeiler, M.~D. (2012).
\newblock Adadelta: An adaptive learning rate method.

\bibitem[Zhang et~al., 2020]{two_time_scale_shangtong}
Zhang, S., Liu, B., Yao, H., and Whiteson, S. (2020).
\newblock Provably convergent two-timescale off-policy actor-critic with
  function approximation.
\newblock In {\em Proceedings of the 37th International Conference on Machine
  Learning}, ICML'20. JMLR.org.

\end{thebibliography}

\end{document}